
\documentclass[10pt,journal,compsoc]{IEEEtran}
%


%

%
\ifCLASSOPTIONcompsoc
\usepackage[nocompress]{cite}
\else
\usepackage{cite}
\fi
%

%
\ifCLASSINFOpdf
\else
\fi
%
%

\usepackage{bm}
\usepackage{subcaption}
\usepackage{amsfonts}
\usepackage{multirow}
\usepackage{lineno}
\usepackage{xcolor}
\usepackage{comment}
\usepackage{amsmath} 
\usepackage{amsthm}
\newtheorem{theorem}{Theorem}

\usepackage{enumitem}
\usepackage{adjustbox}  
\usepackage[linesnumbered,ruled,boxed]{algorithm2e}
\SetKwInput{KwInput}{Input}
\SetNlSty{}{}{}
\usepackage{setspace}
\SetAlgorithmName{Algorithm}{algorithm}{List of Algorithms}

\usepackage{bm}
\DeclareCaptionLabelFormat{nonbold-parentheses}{\normalfont(#2)}
\captionsetup[subfigure]{labelformat=nonbold-parentheses}

\hyphenation{op-tical net-works semi-conduc-tor}
\begin{document}

\title{FairDgcl: Fairness-aware Recommendation with Dynamic Graph Contrastive Learning}

\author{Wei~Chen,
	Meng~Yuan,
	Zhao~Zhang,
	Ruobing~Xie,
	Fuzhen~Zhuang$^*$,
	Deqing~Wang,
	Rui~Liu
	\IEEEcompsocitemizethanks{\IEEEcompsocthanksitem Wei Chen and Meng Yuan are with Institute of Artificial Intelligence, Beihang University, Beijing 100191, China.
		E-mail: \{chenwei23,yuanmeng97\}$@$buaa.edu.com
				\IEEEcompsocthanksitem  Zhao Zhang is with Institute of Computing Technology, Chinese Academy of Sciences, Beijing, China.
		E-mail:  \{zhangzhao2021\}@ict.ac.cn
						\IEEEcompsocthanksitem  Ruobing Xie is with WeChat, Tencent, Beijing, China.
		E-mail:  \{xrbsnowing\}@163.com
		\IEEEcompsocthanksitem Fuzhen Zhuang is with Institute of Artificial Intelligence, Beihang University, Beijing 100191, China, and Zhongguancun Laboratory, Beijing, China
		E-mail: zhuangfuzhen@buaa.edu.cn
			\IEEEcompsocthanksitem Deqing Wang and Rui Liu are with School of Computer, Beihang University,  100191, Beijing, China
	E-mail: \{dqwang,lr\}$@$buaa.edu.cn}
	
	\thanks{ ${*}$ indicates corresponding author.}}


\markboth{XXXX}%
{Shell \MakeLowercase{\textit{et al.}}: Bare Demo of IEEEtran.cls for Computer Society Journals}
\IEEEtitleabstractindextext{%
\begin{abstract}
		As trustworthy AI continues to advance, the fairness issue in recommendations has received increasing attention. A recommender system is considered unfair when it produces unequal outcomes for different user groups based on user-sensitive attributes (e.g., age, gender). Some researchers have proposed data augmentation-based methods aiming at alleviating user-level unfairness by altering the skewed distribution of training data among various user groups.
Despite yielding promising results, they often rely on fairness-related assumptions that may not align with reality, potentially reducing the data quality and negatively affecting model effectiveness.
To tackle this issue, in this paper, we study how to implement high-quality data augmentation to improve recommendation fairness.
Specifically, we propose \textbf{FairDgcl}, a dynamic graph adversarial contrastive learning framework  aiming at improving fairness in recommender system. First,
FairDgcl develops an adversarial contrastive network with a view generator and a view discriminator to learn generating fair augmentation strategies in an adversarial style. Then, we propose two dynamic, learnable models to generate contrastive views within contrastive learning framework, which automatically fine-tune the augmentation strategies. Meanwhile, we theoretically show that FairDgcl can simultaneously generate enhanced representations that possess both fairness and accuracy.
Lastly, comprehensive experiments conducted on four real-world datasets demonstrate the effectiveness of the proposed FairDgcl. 
The code can be found at https://github.com/cwei01/FairDgcl.
\end{abstract}

\begin{IEEEkeywords}
	fairness, recommender system, graph contrastive learning, adversarial training
\end{IEEEkeywords}}

\maketitle
\IEEEdisplaynontitleabstractindextext

\IEEEpeerreviewmaketitle

\IEEEraisesectionheading{\section{Introduction}
	\label{sec.introduction}}
\IEEEPARstart{R}{ecommender}  systems play a vital role in numerous online applications such as e-commerce platforms and entertainment apps, significantly enriching the user experience ~\cite{aljukhadar2012using,qian2013personalized,ko2022survey,wu2023personalized}.
Recently, Graph Neural Networks (GNNs)~\cite{he2020lightgcn,chen2020revisiting,gao2023survey} have significantly boosted recommender systems performance by efficiently leveraging user-item interactions. 
However, since recommender systems are applied to human-centered applications, excessive pursuit of recommendation accuracy may harm the fairness experience of some users~\cite{li2021user,	ling2023learning}.
A recommender system is unfair when it yields unequal outcomes for different user groups based on user-sensitive attributes. (e.g., age, gender).
For instance, equally skilled men are favored over women in job recommender~\cite{lambrecht2019algorithmic}, and active users tend to receive better recommendations than inactive ones~\cite{melchiorre2021investigating}.
\begin{figure}[t]
	\centering
	\setlength{\fboxrule}{0.pt}
	\setlength{\fboxsep}{0.pt}
	\fbox{
		\includegraphics[width=0.95\linewidth]{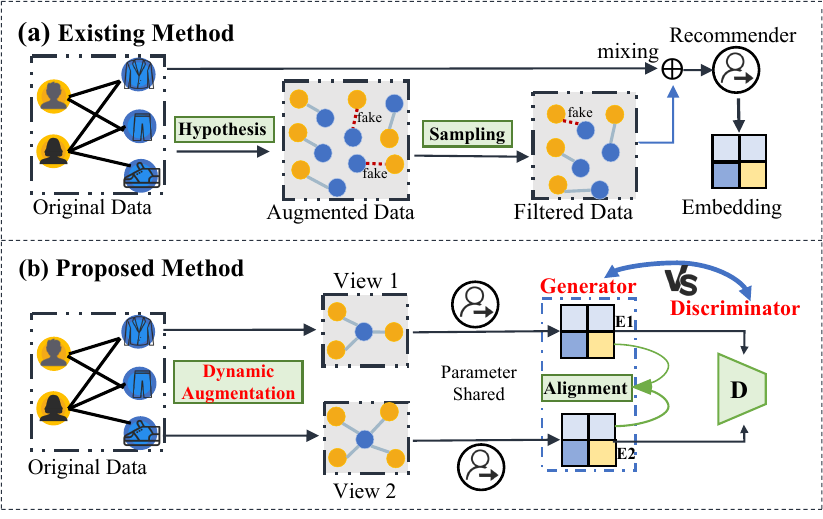}
	}
	\caption{Comparing data augmentation paradigms
		between existing and proposed fairness-aware recommenders.
	}
	\label{intro}
\end{figure}

A recent study~\cite{chen2023improving} uses GNN models to measure the distribution differences between the two user groups in both training data and recommendation outcomes. The results reveal substantial disparities in recommendation outcomes among user groups, often surpassing those observed in the training data. 
This implies that the GNN models might not only inherit unfairness but also amplify it from the data. Consequently, the main concern with the unfairness issue stems from the biases inherent in the training data. Based on this premise, 
research efforts~\cite{rastegarpanah2019fighting,iosifidis2019fae,chen2023improving,ying2023camus} have been devoted to improving recommendation fairness from the perspective of data augmentation. 
Usually, they first generate some augmented interaction data based on some hypothetical labels\footnote{For example, each user in one group has a similar item preference as the item preference of any user in the remaining group~\cite{chen2023improving}.} to balance the uneven interactions between different user groups, and then selectively mix them into the original data for joint training (see Figure~\ref{intro}). Despite achieving some promising results relying on these synthetic data, their applicability may be limited due to users' different personal preferences in real-world recommendation scenarios, potentially reducing data quality and negatively affecting model effectiveness.

Instead of relying on additional assumed labels for generating augmented data, contrastive learning~\cite{khosla2020supervised} (CL) employs the input data itself as the supervision signal. It constructs an augmented data pair to teach the model to compare their similarity, and has shown competitive performance in various machine learning tasks~\cite{jaiswal2020survey,kim2021self,schiappa2023self,huang2023adversarial}.
Impressed by the exceptional performance of CL paradigm, recent studies~\cite{wu2021self,yu2023xsimgcl,xia2023automated} have introduced graph contrastive learning (GCL) into recommender systems. Its key is to perform data augmentation to generate different views and contrastive learning tasks.
While GCL can indeed solve the problem of low-quality augmented data to some extent, it still faces two main challenges when tackling fairness issues in recommender systems:

(1) \textbf{Fairness-oriented Data Augmentation.}
Most current GCL methods heavily depend on unsupervised data augmentation techniques, like randomly removing nodes or edges~\cite{jaiswal2020survey,yu2023xsimgcl}. However, relying solely on these methods may not be sufficient to guarantee fair recommendation results. An intuitive way to promote fairness in this process is by setting fairness constraints~\cite{spinelli2021fairdrop,chen2023fairgap}, such as removing interactions that contain sensitive information. 
Nonetheless, when dealing with graph data, i) the relationship between nodes and edges can be highly complex, with interactions among them being multi-dimensional and non-linear. Simply removing or altering specific interactions may not provide a comprehensive solution to the unfairness issue. ii) More importantly, discrimination may not manifest overtly. It could be concealed within the inherent feature of nodes or arise as a consequence of how nodes interact with one another. 
This inherent ambiguity significantly complicates the identification and quantification of these biased graph elements.
Therefore, it is highly desirable to explore efficient fairness-oriented data-augmentation methods in GCL.

(2) \textbf{Dynamic Data Augmentation.}
In fairness-oriented recommendation, an important problem lies in striking a balance between recommendation accuracy and fairness~\cite{wu2022selective,chen2023fairgap}. 
This balance is delicate, as efforts to enhance one aspect may adversely impact the other. 
As previously noted, within the same model, post-training recommendation unfairness often surpasses pre-training data's inherent unfairness. This suggests that while aiming for greater accuracy, the model may inadvertently introduce more unfairness, leading to a dynamic balance between accuracy and fairness during the process. Although previous methods have shown promise with single-round data augmentation\footnote{It refers to a one-time modification of data to address recommendation fairness and accuracy simultaneously.}, maintaining this balance consistently is still difficult.
Thus, it is essential to implement a dynamic, multi-round data augmentation approach,
which can offer a more nuanced way of continuously recalibrating the balance between accuracy and fairness. However, the discrete nature of graph data, such as binary adjacency matrix values, complicates maintaining balance during dynamic augmentation.

In light of these challenges, we propose a novel \underline{\textbf{D}}ynamic  \underline{\textbf{g}}raph  \underline{\textbf{c}}ontrastive \underline{\textbf{l}}earning framework to improve recommendation \underline{\textbf{Fair}}-ness (\textbf{FairDgcl}), as shown in Figure~\ref{intro}. Specifically, we develop a graph adversarial contrastive approach to address the first challenge. It consists of two key components: a view generator is responsible for learning fair augmentation strategies and generating fair representation, and a view discriminator evaluates the fairness of these generated views.
They work together adversarially to implicitly optimize model and mitigate unfairness in the input graph.
Then, we introduce two dynamically learned models as the view generator to establish distinct
contrastive views that address the issue of dynamic augmentation.
This approach allows the proposed view generator to provide a more adaptive method for continuously recalibrating the balance between accuracy and fairness. In summary,
the main contributions of this paper are outlined as follows:

\begin{itemize}[leftmargin=*,labelindent=0pt,topsep=2pt,itemsep=1pt]
	\item We investigate the drawbacks of existing data augmented approaches for improving recommendation fairness. To our knowledge, this is the first attempt to employ adversarial contrastive learning  to address the fairness issue in recommender systems.
	
	\item  We present FairDgcl, a novel data-augmented framework that develops a view generator and a view discriminator to generate fair node representation via a minimax adversarial game.  We theoretically prove that FairDgcl can generate improved
	representations with fairness and informativeness.
	\item  Our experimental results demonstrate that our FairDgcl outperforms most baseline models on four datasets in terms of both fairness and accuracy, highlighting its effectiveness.
\end{itemize}

\section{Related Work}
In this section, we will briefly review the fairness-aware recommendation and data augmentation, and discuss the relationship between this study and previous works.
\subsection{Fairness-aware Recommendation}
With the development of trustworthy AI~\cite{thiebes2021trustworthy,wang2022trustworthy,li2023trustworthy}, ensuring algorithmic fairness is crucial, particularly in human-centric recommender systems. 
In the field of recommendation, fairness demands can be categorized into user-oriented~\cite{boratto2023counterfactual,chen2023fairgap,zhu2023path} and item-oriented~\cite{wu2022joint,chen2023fairly,jiang2024item}, depending on the stakeholder being considered. User-side fairness focuses on ensuring that all users get equitable chances at recommendations, regardless of their sensitive attributes. Item-side fairness is concerned with ensuring that all items, regardless of their popularity, are fairly recommended to users.
Furthermore, user-oriented fairness can be categorized into two categories: i) individual fairness, which advocates for equal treatment of every user, ii) group fairness, which aims for equal recommendation opportunities across user groups defined on sensitive attributes. In this paper, we explore fairness-aware recommendations based on the concept of group fairness~\cite{dwork2018group,awasthi2020beyond}. Many fair training methods have been developed to achieve group fairness in this domain. For example, Yao and Huang~\cite{yao2017beyond} propose four metrics to optimize the fairness of recommender systems, achieved by integrating fairness-oriented constraints into the learning objective. Wu et al.~\cite{wu2021learning} leverage adversarial learning to efficiently filter sensitive attributes within user data, while preserving essential information, thereby achieving fair representation learning. Li et al.~\cite{li2021towards} develop a notion of counterfactual fairness rooted in causal theory, and construct a framework for a mixed-structure adversarial model to actualize it. 
Li et al.~\cite{li2021user} introduce a reranking algorithm that ensures the fairness of recommendation systems by effectively solving a 0-1 programming problem. Yang et al.~\cite{yang2023towards} address fairness issues arising from distributional discrepancies in recommender systems using distributionally robust optimization techniques. Although these models have achieved numerous successes, they primarily focus on the algorithm itself. Machine learning problems typically involve two core components: data and algorithm. In this paper, our aim is to enhance recommendation fairness from a data-centric perspective, which is significantly different from prevalent model-based research. 
\subsection{Data Augmentation}
With the success of data augmentation techniques in natural language processing (NLP)~\cite{feng2021survey,chen2023empirical} and computer vision (CV)~\cite{shorten2019survey,xu2023comprehensive} tasks, 
data augmentation techniques applied in recommender system has also received extensive attention. 
In recommendation scenarios,  GCL is an effective data augmentation strategy used to alleviate the data sparsity problem. The key idea of this approach is to create diverse enhanced views and subsequently strive to optimize the similarity of representations across these views.
Furthermore, some studies~\cite{huang2023adversarial,jiang2023adaptive} design learnable data augmentation methods that automatically improve node representation in bipartite graphs within the GCL framework.
However, it should be noted that the methods previously mentioned do not address fairness concerns, and the exploration of fairness-aware data augmentation has been limited in only a few studies. For example, 
Spinelli et al.~\cite{spinelli2021fairdrop} argue that nodes with similar sensitive attributes tend to form connections, leading to biased predictions. Thus, they propose an algorithm to selectively remove edges in graphs, reducing biases and improving fairness in predictive tasks. Chen et al.~\cite{chen2023fairgap} introduce a graph editing methodology inspired by counterfactual thinking~\cite{roese1994functional,wang2021counterfactual}, aimed at generating a fair graph and learning fair node representations. More recently, 
Ying et al.~\cite{ying2023camus} provide an attribute-aware counterfactual data augmentation strategy tailored for minority users, designed to reduce disparities across different user groups. 
Additionally, Chen et al.~\cite{chen2023improving} propose a fairness-aware augmented framework, based on the assumption that users in one group share similar item preferences with users in the other group, aiming to generate synthetic interaction data to balance group interactions for different user groups.  
While achieving improved results relying on these augmented data,
their reliability in real scenarios is questionable due to varying user preferences, potentially degrading data quality and impairing model effectiveness. Hence, we will explore high-quality data augmentation strategies to improve fairness in recommender systems. In this work, we consider contrastive learning as a data augmentation technique, utilizing the input data itself as the supervision signal and constructing an augmented data to guide model learning.

\begin{figure*}[t]
	\vspace{-0pt}
	\centering
	\setlength{\fboxrule}{0.pt}
	\setlength{\fboxsep}{0.pt}
	\fbox{
		\includegraphics[width=1\linewidth]{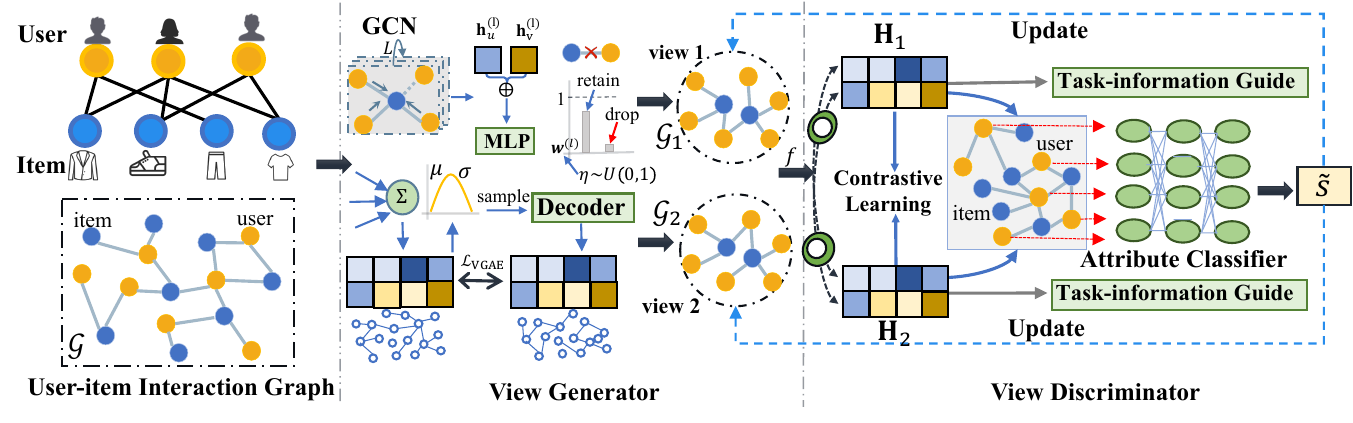}
	}
	\caption{A depiction of proposed FairDgcl framework. It is specifically designed to enhance fairness in input bipartite graphs automatically through two modules: the view generator and the view discriminator. The former learns fair augmentation strategies to generate augmented views, while the latter evaluates whether these generated views achieve sufficient fairness.
	}
	\label{model}
\end{figure*}

\section{Preliminaries}
In this section, we present preliminary knowledge. We first introduce typical neural graph collaborative filtering methods and then describe the notation of user-oriented fairness.
\subsection{Neural Graph Collaborative Filtering}
Collaborative filtering (CF) is one of the core components of recommendation. Its purpose is to model users' potential preferences through observed user-item interactions. 
A typical recommender system comprises two sets of entities: a user set $\mathcal{U}$ and an item set $\mathcal{V}$. 
In the context of graph collaborative filtering, the interactive matrix is transformed into a bipartite graph denoted as $\mathcal{G} = <\mathcal{U} \cup \mathcal{V}, \mathcal{E}>$, where $\mathcal{E}$ represents the corresponding edges.

	
	In line with most prior studies~\cite{chen2023fairgap,huang2023adversarial,jiang2023adaptive}, we adopt LightGCN~\cite{he2020lightgcn} as the backbone GNN, leveraging its message-passing mechanism to aggregate and summarize node embeddings for both the user $u$ and item $v$. Subsequently, we calculate the predicted user $u$'s preference for item $v$ as the inner product between the respective embeddings, defined as $y\left(u, v\right)=\mathbf{h}_u^T \mathbf{h}_v$. During the training process, we adopt the Bayesian Personalization Ranking (BPR) loss to learn
	the embeddings for each user and item:
	\begin{equation}
		\mathcal{L}_{\mathrm{BPR}} (u, v^{+}, v^{-}) = -\log \sigma\left(y\left(u, v^{+}\right)-y\left(u, v^{-}\right)\right),
		\label{bprloss}
	\end{equation}
	where $\left(u, v^{+}\right)$ represents the positive interactions and $\left(u, v^{-}\right)$ represent random negative interactions,
	$\sigma$ denotes the sigmoid function. The $\mathrm{BPR}$ loss aims to boost the prediction scores of observed user-item interactions above those of unobserved interactions.

	\subsection{User-oriented Fairness}
	
	In recommendation, 
	user-oriented fairness aims to provide a balanced and personalized recommendation experience by minimizing disparities among different users. More specifically, user fairness can be categorized into two main aspects based on target audiences: individual fairness and group fairness. Individual fairness treats similar users equally, while group fairness aims to reduce service disparities among user groups for satisfactory recommendations.
	This work primary emphasis lies in addressing user-level group fairness while considering two distinct user groups. Initially, we divide users $\mathcal{U}$ into two groups based on their sensitive attributes, namely $\mathcal{U}_{\bm{s}=0}$ and $\mathcal{U}_{\bm{s}=1}$. 
	Subsequently, we utilize the group fairness metric to improve fairness~\cite{chen2023improving,chen2023fairgap,ling2023learning}, a widely accepted measure designed to minimize performance disparities between different user groups. Formally, the group fairness is expressed as follows: 
	\begin{equation}
		\Phi =\left|\frac{1}{\left|\mathcal{U}_{\bm{s}=0}\right|} \sum_{u \in \mathcal{U}_{\bm{s}=0}}\!\!\! \mathcal{F}_{u}-\frac{1}{\left|\mathcal{U}_{\bm{s}=1}\right|} \sum_{u \in \mathcal{U}_{\bm{s}=1}} \!\!\!\mathcal{F}_{u}\right|,
		\label{phi}
	\end{equation}
	where $\mathcal{F}_{u}$ is the recommendation performance for user $u$ (e.g., recall, ndcg).
	In practice, optimizing Eq.~(\ref{phi}) in recommendation models is challenging due to the non-differentiable nature of conventional ranking-based metrics such as recall and ndcg.
	To tackle this problem, prior research has proposed two main approaches. The first approach focuses on creating surrogate loss functions to make the fairness metric differentiable, enabling direct optimization~\cite{yang2023towards}. This method aims to attain fairness while preserving model performance.
	The second approach adopts a counterfactual fairness perspective~\cite{li2021towards} and suggests that group fairness can be achieved by removing users' sensitive attributes. This means that during the recommendation process, the model either disregards or reduces its reliance on these attributes to promote fairness. 
	In this paper, we align with this second solution, aiming to achieve group fairness by filtering users' sensitive attributes.

	\section{METHODOLOGY}
	In this section, we first present the overview of the proposed FairDgcl model (in Figure~\ref{model}), and then bring forward the details of its major modules. Finally, we introduce the optimization processes.

	\subsection{Model Overview}
	As shown in Figure~\ref{model}, FairDgcl consists of two main modules, namely view generator and view discriminator.
	Specifically, the view generator learns fair augmentation strategies and generates fair representations.
	Then, the view discriminator is designed to evaluate whether the augmented views are fair enough. The view generator and the view discriminator are trained in an adversarial style to generate high-quality views. These augmented views are used to train fair and effective user and item representations.
	Table~\ref{notation} outlines the primary symbols and their meanings in this paper.
	
	\begin{table}[t]
		\caption{Notations and definitions.}
		\centering
		\resizebox{0.75\linewidth}{!}{
		\begin{tabular}{c|c}
			\hline 
			\textbf{Notations} & 	\textbf{Definitions} \\ \hline \hline
			$\mathcal{G}$    & The original graph     \\
			$\mathcal{U}$    & The set of all users     \\
			$\mathcal{V}$  & The set of all items     \\
			$\mathcal{E}$    & The set of all edges    \\ 
						$\mathcal{X}$    & The node feature vector   \\ 
									$\mathcal{A}$    & The adjacency matrix    \\ 
			$\mathcal{G}_{1}$    & The first generative graph  \\ 
			$\mathcal{G}_{2}$   & The second generative graph   \\ 
			$\mathbf{H}_{1}$    & The embedding of $\mathcal{G}_{1}$    \\ 
			$\mathbf{H}_{2}$   & The embedding of $\mathcal{G}_{2}$     \\ 
			$\bm{s}$    & The real sensitive attribute   \\ 
			$\tilde{\bm{s}}$    & The predicted sensitive attribute   \\ 
			$f$   & The graph encoder   \\ 
			$\theta_{g}$   & The parameter set of view generator   \\ 
			$\theta_{f}$   &  The parameter set of graph encoder $f$ \\ 
			$\theta_{d}$   & The parameter set of view discriminator   \\ 
			
			\hline 
		\end{tabular}}
		\label{notation}
	\end{table}

	\subsection{View Generator}
	Given a graph $\mathcal{G} = <\mathcal{U} \cup \mathcal{V}, \mathcal{E}>$, where $\mathcal{U}$ ($\mathcal{V}$) represents user (item) nodes, and $\mathcal{E}$ denotes
	corresponding edges. The view generator is designed to generate two augmented views. 
	As previously discussed, fairness-aware data augmentation requires considering the dynamic balance between accuracy and fairness. Accordingly,
	we propose to use two learnable models as view generators to generate adaptive views for GCL.
	It should be noted that it is crucial to generate diverse and discriminative views with GCL. If two views produced by the same model have identical distributions, there's a risk of model collapse~\cite{jiang2023adaptive}, potentially leading to inaccurate comparison optimization.
	To address this issue, we propose employing two separate models to augment the user-item graph from distinct perspectives.
	Precisely, we leverage two learned models in FairDgcl: a recognition model and a generative model. 
	The recognition model uses the graph's topology to remove unfair interactions in the user-item graph and create a less biased view. 
	Meanwhile, the generative model focuses on reconstructing fair views by considering graph distributions.
	In the following sections, we will provide detailed descriptions of both methods.
	\subsubsection{\textbf{Recognition Model View Generator.}} 
	GNNs employ message passing to capture node representations. However, in real-world situations, these interactions can introduce unfairness issues due to potential biases. For example, if user-item interactions are influenced by sensitive attributes, the model may wrongly interpret them as inherent preferences, leading to biased outputs.
	Hence, we propose a parameterized recognition model~\cite{ye2021sparse,wang2023knowledge} designed to reduce unfairness by directly removing interactions associated with sensitive attributes. This can reduce the model's reliance on sensitive attributes and better reflect users' real interests.
	
	Technically,  the recognition model computes a real-valued fairness weight $\bm{w}_{e}^{l}$ and determines a binary sampling probability $\bm{p}_{e}^{l} \in \{0,1\}$ for every edge $e \in \mathcal{E}$ in $l$-th layer. It's worth mentioning that the edge  $e$ will be retained if $\bm{p}_{e}^{l}=1$ and discarded otherwise. Formally, the weight $\bm{w}_{e}^{l}$ is computed as follows:
	\begin{equation}
		\bm{w}_{e}^l=\text{MLP}\left(\left[\mathbf{h}_u^{(l)} \parallel  \mathbf{h}_v^{(l)}\right]\right),
	\end{equation}
	where $e = (u, v)$ represents an edge, $\bm{w}_e^l$ signifies edge fairness importance, MLP stands for multi-layer perception, and $\parallel $ represents concatenation. $\mathbf{h}_u^{(l)}$ and $\mathbf{h}_v^{(l)}$ are user and item representations in the $l$-th layer.
	Note that a higher $\bm{w}_e^l$ suggests that the edge $e$ is more likely to be critical and should be preserved.
	To enable end-to-end optimization and ensure differentiability in the edge dropping procedure, we transform the discrete variable $\bm{p}_{e}^{l}$ into a continuous variable in $(0, 1)$. It can be achieved using the Gumbel-Max reparameterization trick~\cite{luo2020parameterized}, where we define it as:
	\begin{equation}
		\bm{p}_e^l=\text{sigmoid}\left(\left(\log (\eta)-\log (1-\eta)+\bm{w}_e^l\right) / \tau_r\right),
	\end{equation}
	where $\eta$ is a priori constant offset, and temperature hyper-parameter $\tau_r$ is used to control the approximation. As $\tau_r$ approaches 0, $\bm{p}_{e}^{l}$ converges to a binary value. Finally, 
	we denote the augmented interactive graph as $\mathcal{G}_{1}$.

		\subsubsection{\textbf{Generative model as View Generator. }}
		In contrast to the recognition approach in the previous section, this part focuses on improving node representation by utilizing Variational Graph Auto-Encoder (VGAE)~\cite{kipf2016variational} from the reconstruction respective.
		VGAE seamlessly combines variational auto-encoder principles with graph generation techniques, offering an efficient and flexible framework to enhance node representations.
		VGAE consists of two main components: the encoder and the decoder. Encoders typically utilize GCNs to extract node feature representations and project them into a latent space. Specifically, it transforms the node feature vector $\mathcal{X}$ and adjacency matrix $\mathcal{A}$ into two vectors: a mean vector $\mu$ and a standard deviation vector $\sigma$, denoted as $\mu, \log (\sigma^2) = Encoder(\mathcal{X}, \mathcal{A})$.
		The decoder's role is to rebuild the graph based on features, often by predicting edge connections between nodes using the inner product of their latent representations, denoted as $p(\mathcal{X}\!\!\mid\!\! \mathcal{Z})=\text{sigmoid} \left(\mathcal{X} \mathcal{X}^T\right)$.
		Here, $\mathcal{Z}$ represents the node representation drawn from the latent space,
		and sigmoid function used to transform the inner product into a probability. 
		This process ensures a smooth and concise logic flow.
		
		In Figure~\ref{model}, we employ a multi-layer GCN as the encoder to capture graph embeddings and calculate both the mean and standard deviation of these embeddings. The decoder, implemented as an MLP, takes these mean and standard deviation values, along with Gaussian noise, to generate a new graph. Lastly, VGAE is end-to-end trained by maximizing the evidence lower bound (ELBO):
		\begin{equation}
			\mathcal{L}_\mathrm{VGAE}=\mathbb{E}_{q(\mathcal{Z} \mid \mathcal{X}, \mathcal{A})}[\log p(\mathcal{A} \!\!\mid\!\! \mathcal{Z})]-K L[q(\mathcal{Z}\!\! \mid \!\!\mathcal{X}, \mathcal{A}) \| p(\mathcal{Z})],
		\end{equation}
		where the first term represents the reconstruction probability, indicating how likely the decoder can reconstruct the adjacency matrix $\mathcal{A}$ from the latent representation. The second term is the Kullback-Leibler divergence between the variational distribution $q(\mathcal{Z}\!\!\mid\!\! \mathcal{X}, \mathcal{A})$ of the latent representation $\mathcal{Z}$ and the prior distribution $p(\mathcal{Z})$. This divergence serves as a regularization factor, encouraging the latent space to closely match the prior distribution.
		Similarly, we denote the second augmented interactive graph as $\mathcal{G}_{2}$.

		\subsection{View Discriminator} 
		The view discriminator is a user-level attribute classifier for recognizing the generated views. Specifically, the discriminator takes an augmentation view embedding as input and judges whether the view still contains sensitive attributes.
		Considering the presence of two different generative models, ideally, we should train two discriminators independently. However, this approach weakens the connections between the two generated views and also increases the model's parameters\footnote{The complexity is $\mathcal{O}(2 \times|\mathcal{U}| \times d)$), where $|\mathcal{U}| $ denotes the number of users, and $d$ indicates the embedding dimension.}.
		Thus, we merge the two augmented views into one augmented view and use a unified discriminator to simultaneously optimize both augmented representations. 
		More precisely, we employ a parameter-sharing graph encoder $f$ to transform each of the augmented views into a unified embedding space, denoted as $\mathbf{H}_{1} = f(\mathcal{G}_{1})$ and  $\mathbf{H}_{2} = f(\mathcal{G}_{2})$.
		Subsequently, an MLP model serves as the discriminative network to predict the sensitive information based on the fused user representation, that is:
		\begin{equation}
			\tilde{\bm{s}} =\text{MLP}\left(\text{mean} \left(\mathbf{H}_{1}^{(u)}, \mathbf{H}_{2}^{(u)}\right)\right),
		\end{equation}
		where $\text{mean}$ denotes mean pooling, MLP is short for multi-layer perception. $\mathbf{H}_{1}^{(u)}$ and $\mathbf{H}_{2}^{(u)}$ represent the user embeddings in augmented view $\mathcal{G}_{1}$ and $\mathcal{G}_{2}$, respectively.  $\tilde{\bm{s}} \in[0,1] $ is the predicted score of sensitive attribute.
		
		To train the discriminator, we define the attribute label of user $i$ as $\bm{s}_{i}$, and the classification loss is defined as follows:
		\begin{equation}
			\mathcal{L}_{\mathrm{VD}}= -\frac{1}{|\mathcal{U}|} \sum_{i \in \mathcal{U} }\left[\bm{s}_i \log \tilde{\bm{s}}_i+\left(1-\bm{s}_i\right) \log \left(1-\tilde{\bm{s}}_i\right)\right].
			\label{dis}
		\end{equation}

		\noindent
		\textbf{Theoretical Analysis.} We'll theoretically show  that minimizing $-\mathcal{L}_{\mathrm{VD}}$ encourages view generator to produce fair augmented views. 
		\begin{theorem}
			Assume the discriminator loss for each sample is bounded, then we show that 
			minimizing $-\mathcal{L}_{\mathrm{VD}}$ is equivalent to optimizing an upper bound on the group fairness $\Phi$ in Eq. (\ref{phi}).
			\label{th1}
		\end{theorem}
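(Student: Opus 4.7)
The strategy is to bound $\Phi$ by a statistical distance between the two group-conditional distributions of the augmented user representation, and then to express that distance through the discriminator cross-entropy in Eq.~(7). Let $P_0$ and $P_1$ denote the empirical distributions of the fused user embedding $\bar{\mathbf{h}}^{(u)}=\tfrac{1}{2}\bigl(\mathbf{H}_1^{(u)}+\mathbf{H}_2^{(u)}\bigr)$ conditioned on $\bm{s}=0$ and $\bm{s}=1$ respectively. Because the recommendation metric $\mathcal{F}_u$ factors through $\bar{\mathbf{h}}^{(u)}$ via the inner-product score $y(u,v)=\mathbf{h}_u^{\top}\mathbf{h}_v$ from Section~3.1 and is bounded by some constant $C$ under the per-sample-bound hypothesis, the variational characterization of total variation gives
\[
\Phi \;=\; \Bigl|\mathbb{E}_{P_0}[\mathcal{F}_u]-\mathbb{E}_{P_1}[\mathcal{F}_u]\Bigr| \;\leq\; 2C\,\mathrm{TV}(P_0,P_1).
\]

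Second, I would tie $\mathrm{TV}(P_0,P_1)$ to $\mathcal{L}_{\mathrm{VD}}$. The loss in Eq.~(7) is the empirical binary cross-entropy of the group-membership classifier $\tilde{\bm{s}}$ built on $\bar{\mathbf{h}}^{(u)}$. Under sufficient discriminator capacity, a standard Goodfellow-style calculation identifies the Bayes-optimal classifier $\tilde{\bm{s}}^{\star}(x)=P_1(x)/(P_0(x)+P_1(x))$, and substitution yields $\min_{D}\mathcal{L}_{\mathrm{VD}}=\log 2-\mathrm{JSD}(P_0\Vert P_1)$. Combining this identity with the Pinsker-type bound $\mathrm{TV}(P_0,P_1)\leq 2\sqrt{\mathrm{JSD}(P_0\Vert P_1)}$ gives
\[
\Phi \;\leq\; 4C\sqrt{\mathrm{JSD}(P_0\Vert P_1)} \;=\; 4C\sqrt{\log 2+\bigl(-\mathcal{L}_{\mathrm{VD}}\bigr)}.
\]
The right-hand side is monotonically increasing in $-\mathcal{L}_{\mathrm{VD}}$, so driving $-\mathcal{L}_{\mathrm{VD}}$ downward, which is exactly what the view generator does inside the adversarial game, strictly shrinks a valid upper bound on $\Phi$. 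This is the precise sense in which minimizing $-\mathcal{L}_{\mathrm{VD}}$ optimizes an upper bound on the group fairness.

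The hard part will be reconciling the optimal-discriminator identity with the finite-capacity MLP that is actually trained jointly with the generator. I would address this in two steps: (i) invoke a universal-approximation hypothesis on the discriminator, standard in adversarial-training analyses, so that the inner maximization over $\theta_d$ recovers the Bayes classifier up to a controllable approximation gap, which can be folded into the additive constant of the bound; and (ii) use the per-sample-bound hypothesis to absorb finite-sample deviations between $\mathcal{L}_{\mathrm{VD}}$ and its population counterpart into the constant $C$, so that the population-level inequality carries over to the empirical loss optimized in practice. A smaller subtlety is that ranking metrics like Recall and NDCG are non-differentiable, but the argument above only uses measurability and boundedness of $\mathcal{F}_u$ as a function of $\bar{\mathbf{h}}^{(u)}$, so no smoothness assumption is needed.
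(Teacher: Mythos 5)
Your route is genuinely different from the paper's, and is arguably the more standard one in the fair-representation literature. The paper never passes through total variation or Jensen--Shannon divergence: it works pointwise on the cross-entropy, using the per-sample bound $|\log\tilde{\bm{s}}_i|\le\delta$ to obtain the tangent-line inequality $\log\tilde{\bm{s}}_i \ge \tfrac{\delta}{1-e^{-\delta}}(\tilde{\bm{s}}_i-1)$, splits the sum in Eq.~(\ref{dis}) by group, and arrives at the affine bound $-\mathcal{L}_{\mathrm{VD}} \ge (r\,t)\,\Phi - (r\,t)$ with $t=\delta/(1-e^{-\delta})$ and $r=\max(r_0,r_1)$; it then simply \emph{identifies} the resulting gap $\mathbb{E}_{s=0}[\tilde{\bm{s}}_i]-\mathbb{E}_{s=1}[\tilde{\bm{s}}_i]$ with $\Phi$ by assertion. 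What the paper's elementary linearization buys is that its inequality holds for \emph{any} discriminator output satisfying the boundedness assumption --- no optimal-discriminator or capacity hypothesis is needed, which matters since $\theta_d$ is trained jointly and is never at its Bayes optimum. What your route buys is a principled bridge from the classifier to the fairness metric: the chain $\Phi \le 2C\,\mathrm{TV}(P_0,P_1)$ followed by the JSD identity actually justifies the step the paper asserts, at the cost of the universal-approximation assumption you flag. Two caveats on your side: (i) the identity $\min_D \mathcal{L}_{\mathrm{VD}}=\log 2-\mathrm{JSD}(P_0\Vert P_1)$ as stated assumes balanced groups, whereas Eq.~(\ref{dis}) averages over $|\mathcal{U}|$ with proportions $r_0\neq r_1$, so you need the correspondingly weighted JSD; (ii) $\mathcal{F}_u$ (recall/NDCG) depends on the user's held-out interactions and not only on the fused embedding, so ``$\mathcal{F}_u$ factors through $\bar{\mathbf{h}}^{(u)}$'' is itself an idealization --- though the paper's own passage from step $a$ to step $b$ makes a strictly stronger unjustified leap, so this is not a defect relative to the target statement.
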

		\begin{proof}
			Let $\tilde{\bm{s}}$ be the predicted score of attribute information. We assume that the discriminator loss for each sample is bounded, meaning there exists a constant $\delta$ so that $\left|\bm{s}_i \log \tilde{\bm{s}}_i+\left(1-\bm{s}_i\right) \log \left(1-\tilde{\bm{s}}_i\right)\right| \leq \delta$. Together with the concavity of log(·) function and Jensen’s inequality, for any $\tilde{\bm{s}}_i \in \left[0,1\right]$ with $| \text{log}(\tilde{\bm{s}}_i)| \leq \delta$, we have:
			\begin{equation}
				\log (\tilde{\bm{s}}_i) \geq \frac{\delta}{1-e^{-\delta}} \cdot (\tilde{\bm{s}}_i-1).
			\end{equation}
		
			Then, we suppose the proportion of users in $\mathcal{U}_{\bm{s}=0}$ is $r_{0}$, and the proportion in $\mathcal{U}{\bm{s}=1}$ is $r_{1}$, with $r = \max(r_0, r_1)$.	
			Next, we derive group fairness $\Phi = \mathbb{E}_{i \sim \mathcal{U}_{\bm{s}=0}}\left[  \mathcal{F}_{i} \right]-\mathbb{E}_{j \sim \mathcal{U}_{\bm{s}=1}}\left[ \mathcal{F}_j \right]$, where $\mathcal{F}_{i}$ is the recommendation quality (e.g., recall, ndcg) of user $i$. Besides, for clarity of presentation, we introduce an intermediate variable $t = \frac{\delta}{1-e^{-\delta}}$. Hence, Eq. (\ref{dis}) can be rewritten as:
			\begin{equation}
				\begin{aligned}
					- \mathcal{L}_{\mathrm{VD}} 
					& =\frac{1}{|\mathcal{U}|} \sum_{i \in \mathcal{U} }\left[\bm{s}_i \log \tilde{\bm{s}}_i+\left(1-\bm{s}_i\right) \log \left(1-\tilde{\bm{s}}_i\right)\right] \\
					&=r~\mathbb{E}_{i \sim \mathcal{U}_{\bm{s}=0}}\left[\log \left(\tilde{\bm{s}}_i\right)\right]+r_{1}~\mathbb{E}_{i \sim \mathcal{U}_{\bm{s}=1}}\left[\log \left(1 - \tilde{\bm{s}}_i\right)\right] \\
					&\geq r\left[\mathbb{E}_{i \sim \mathcal{U}_{\mathbf{s}=0}}\left[\log \left(\hat{\bm{s}}_i\right)\right]+\mathbb{E}_{i \sim \mathcal{U}_{\bm{s}=1}}\left[\log \left(1-\tilde{\bm{s}}_i\right)\right]\right] \\
					&\geq r \left[\mathbb{E}_{i \sim \mathcal{U}_{\bm{s}=0}}\left[  (\tilde{\bm{s}}_i-1) \cdot t\right]-\mathbb{E}_{i \sim \mathcal{U}_{\bm{s}=1}}\left[ (\tilde{\bm{s}}_i) \cdot t \right]\right] \\
					&= (r \cdot t)  \left[\mathbb{E}_{i \sim \mathcal{U}_{\bm{s}=0}}\left[  (\tilde{\bm{s}}_i-1) \right]-\mathbb{E}_{i \sim \mathcal{U}_{\bm{s}=1}}\left[ \tilde{\bm{s}}_i \right]\right] \\
					&\stackrel{a}{=}(r \cdot t) \left[\mathbb{E}_{i \sim \mathcal{U}_{\bm{s}=0}}\left[  \hat{\bm{s}}_i \right]-\mathbb{E}_{i \sim \mathcal{U}_{\bm{s}=1}}\left[ \tilde{\bm{s}}_i \right]\right] - (r \cdot t)  \\ 
					&\stackrel{b}{=} (r \cdot t)~\Phi - (r \cdot t), \quad \text{where } r \cdot t > 0.
				\end{aligned}
				\label{fair}
			\end{equation}
			
			We find the essence of group fairness $\Phi$ lies in reducing the disparity in recommendation quality between two distinct user groups. In other words, this involves minimizing the differences in user embeddings across these groups.
			Remarkably, this aligns perfectly with the intention of step $a$ in Eq.~(\ref{fair}), thereby establishing the foundation for step $b$. Indeed, Eq.~(\ref{fair}) shows $-\mathcal{L}_{\mathrm{VD}} $ is an upper for group fairness $\Phi$, that is to say maximizing the view discriminator loss is equivalent to minimizing the unfairness of the view generator. Therefore, the above statement in Theorem~\ref{th1} holds.
		\end{proof}
		\subsection{Learning informative View Generator}
		In theory, optimizing Eq.~(\ref{dis}) can indeed help the view generator to reduce data unfairness and generate fair node embedding. However, if not properly controlled, it might adopt extreme strategies.
		For instance, the generators could consistently generate node representations with all features set to zero~\cite{ling2023learning}. 
		While this would eliminate unfairness entirely, it would also remove valuable information from the original data, making the generated data essentially useless.
		To ensure that the view generator model retains its informativeness, we have also introduced two important loss functions to preserve the most essential information from the original graph.
		
		\noindent
		\underline{\textbf{Task-Guide BPR Loss.}}
		While view generator may generate fair embeddings from different perspectives, there may not be optimization signals to align these generated views with the main task.
		Thus, we employ the BPR loss, as shown in Eq.~(\ref{bprloss}), to individually optimize each view generator. For the first recognition model:
		\begin{equation}
			\mathcal{L}^{rec}_{\mathrm{BPR}} (u, v^{+}, v^{-}) = -\log \sigma\left(y^{r}\left(u, v^{+}\right)-y^{r}\left(u, v^{-}\right)\right),
			\label{task}
		\end{equation}
		where $\left(u, v^{+}\right)$ indicates the positive interaction in $\mathcal{G}_{1}$, $\left(u, v^{-}\right)$ is a randomly chosen negative interaction, $y^{r}$ denotes the interaction score from the first recognition model. Then, the $\mathcal{L}^{gen}_{\mathrm{BPR}}$ in the second generative model can be derived in a similar manner.

		\noindent
		\underline{\textbf{Contrastive Learning Loss.}}
		In line with existing self-supervised learning paradigms~\cite{huang2023adversarial,zhang2023automated}, we pull close the representations for the same node in two different views, and push away the embeddings for different entities in the two views. Based on these two augmented embeddings, the contrastive loss is formally defined by:
		\begin{equation}
			\mathcal{L}_{\mathrm{NCE}}=\sum_{i \in \mathcal{U} \cap \mathcal{V}}-\log \frac{\exp \left(\text{cos}\left(\mathbf{H}_1(i), \mathbf{H}_{2}(i)\right) / \tau\right)}{\sum_{i \neq j} \exp \left(\text{cos}\left(\mathbf{H}_1(i), \mathbf{H}_{2}(j) / \tau\right)\right.},
			\label{nce}
		\end{equation}
		where $\text{cos}(\cdot)$ represents the cosine similarity function and $\tau$ is the temperature hyper-parameter. $\mathbf{H}(i) \in \mathbb{R}^d$ denotes the embedding vector in the $i$-th row. In practice, this contrastive loss $\mathcal{L}_{\mathrm{NCE}}$ not only ensures the alignment of  two views but also enhances the view generator's capability to produce more informative representation.

		\noindent
		\textbf{Theoretical Analysis.} 
		Here, we'll show that minimizing contrastive loss can help view generator create more informative views.
		\begin{theorem}
			Given the original graph $\mathcal{G}$ and its generative views $\mathcal{G}_{1}$ and $\mathcal{G}_{2}$, along with their corresponding embeddings ${\mathbf{H}_{1}}$ and ${\mathbf{H}_{2}}$.
			Contrastive learning objective is a lower bound of mutual information between $\mathcal{G}$ and generative views $\mathcal{G}_{1}$, $\mathcal{G}_{2}$.
			\label{th2}
		\end{theorem}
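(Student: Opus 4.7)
The plan is to chain two standard inequalities. First, I would show that $-\mathcal{L}_{\mathrm{NCE}}$, up to an additive constant, is a lower bound on the mutual information $I(\mathbf{H}_1;\mathbf{H}_2)$ between the two view embeddings via the InfoNCE variational bound. Second, I would lift this bound to $I(\mathcal{G};\mathcal{G}_1,\mathcal{G}_2)$ via the data-processing inequality (DPI), exploiting the Markov structure $(\mathbf{H}_1, \mathbf{H}_2) \leftarrow (\mathcal{G}_1,\mathcal{G}_2) \leftarrow \mathcal{G}$ induced by the view generator and the shared encoder $f$.

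For the first step, I would interpret the scaled cosine similarity $\exp(\text{cos}(\mathbf{H}_1(i),\mathbf{H}_2(i))/\tau)$ as an unnormalized critic that approximates the density ratio $p(\mathbf{H}_2(i)\mid\mathbf{H}_1(i))/p(\mathbf{H}_2(i))$. Reading each summand of Eq.~(\ref{nce}) as an expectation over one positive pair together with $K-1$ negatives drawn (approximately) from the marginal, and then applying Jensen's inequality to the concave $\log$ acting on the partition sum in the denominator, one recovers the standard InfoNCE bound in the spirit of van den Oord et al.:
\begin{equation*}
I(\mathbf{H}_1;\mathbf{H}_2) \;\geq\; \log K \;-\; \mathcal{L}_{\mathrm{NCE}},
\end{equation*}
where $K = |\mathcal{U} \cup \mathcal{V}|$ is the effective contrastive sample size.

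For the second step, since $\mathcal{G}_1$ and $\mathcal{G}_2$ are both generated from $\mathcal{G}$ by the view generator, and $\mathbf{H}_1 = f(\mathcal{G}_1)$, $\mathbf{H}_2 = f(\mathcal{G}_2)$ are computed through the shared encoder, the chain $\mathbf{H}_1 \leftarrow \mathcal{G}_1 \leftarrow \mathcal{G} \to \mathcal{G}_2 \to \mathbf{H}_2$ is Markov. Two successive applications of DPI then give
\begin{equation*}
I(\mathbf{H}_1;\mathbf{H}_2) \;\leq\; I(\mathcal{G}_1;\mathcal{G}_2) \;\leq\; I(\mathcal{G};\mathcal{G}_1,\mathcal{G}_2),
\end{equation*}
and combining with the InfoNCE bound yields
\begin{equation*}
I(\mathcal{G};\mathcal{G}_1,\mathcal{G}_2) \;\geq\; \log K \;-\; \mathcal{L}_{\mathrm{NCE}},
\end{equation*}
which establishes the theorem.

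The hardest part will be the first step: making the InfoNCE bound rigorous requires identifying the cosine critic with a valid density-ratio estimator, specifying the correct reference distribution for the negatives, and controlling the Jensen gap uniformly in the temperature $\tau$. In particular, I need to justify that drawing $j \neq i$ uniformly over $\mathcal{U} \cup \mathcal{V}$ in Eq.~(\ref{nce}) can be identified (at least in expectation) with sampling from the marginal embedding distribution, so that the denominator acts as a Monte-Carlo estimate of the normalizer of the density ratio. The DPI step is then routine, modulo the mild modelling assumption that the two view-generation branches are conditionally independent given $\mathcal{G}$, which is satisfied here because the recognition model and the generative model are parameterized separately.
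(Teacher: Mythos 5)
Your proposal is correct and follows essentially the same route as the paper: an InfoNCE variational lower bound on $\mathcal{I}(\mathbf{H}_1;\mathbf{H}_2)$, lifted to $\mathcal{I}(\mathcal{G};\mathcal{G}_1,\mathcal{G}_2)$ via the data-processing inequality along the Markov chain $\mathbf{H}_1 \leftarrow \mathcal{G}_1 \leftarrow \mathcal{G} \rightarrow \mathcal{G}_2 \rightarrow \mathbf{H}_2$ under conditional independence of the two views given $\mathcal{G}$. The only cosmetic difference is the form of the first bound (you use the standard $\log K - \mathcal{L}_{\mathrm{NCE}}$ of van den Oord et al., the paper cites the variant $\mathcal{I} \geq \log(1+e^{-\mathcal{L}_{\mathrm{NCE}}})$), and your discussion of the critic and negative-sampling assumptions is, if anything, more careful than the paper's sketch.
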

		\begin{proof}
			First, for two random variables $X$ and $Y$, we define the mutual information $	\mathcal{I}(X,Y)$ as:
			\begin{equation}
				\mathcal{I}(X ; Y)=\sum_{x \in X, y \in Y} p(x, y) \log \frac{p(x, y)}{p(x) p(y)},
		\end{equation}
			where $p(x,y)$ is the joint distribution of $X$ and $Y$, and $p(x)$ and $p(y)$ are marginal distributions. 
			Then, we consider a general InfoNCE contrastive loss, it can be expressed as:
			\begin{equation}
				\mathcal{L}_{\text{NCE}}=-\mathbb{E}_{p(x, y)}\left[\log \frac{e^{g(x, y)}}{\sum_{y^{-}} e^{g\left(x, y^{-}\right)}}\right],
			\end{equation}
			where $(x,y)$ denotes positive sample pair, and $(x,y^{-})$ is negative sample pair.  $g(\cdot)$ is a scoring function designed to assign higher scores to positive samples. 
			Following previous works~\cite{zhu2020deep,ling2023learning}, combining jensen's inequality and logarithms and inequalities, we can derive:
			\begin{equation}\mathcal{I}(X ; Y) \geq \log \left(1+e^{-\mathcal{L}_{\text{NCE}}}\right).
			\end{equation}
		
		This shows that minimizing the contrast loss is equivalent to maximizing the lower bound of mutual information of $X$ and $Y$\footnote{Note that the proof here is only an overview, and detailed mathematical proof requires more rigorous mathematical derivation.}.
			Thus, in this work, we can state:
			\begin{equation}
				-\mathcal{L}_{\text{NCE}} \leq \mathcal{I}\left(\mathbf{H}_{1} \vcenter{\hbox{;}}{\mathbf{H}_{2}}\right).
			\end{equation}
			
			In information theory~\cite{gallager1968information}, the data processing inequality states that for a Markov chain consisting of a sequence of random variables $\mathbf{X}\rightarrow \mathbf{Y}\rightarrow\mathbf{Z}$, it must adhere to the inequality $\mathcal{I}(\mathbf{X}\vcenter{\hbox{;}}\mathbf{Y}) \geq \mathcal{I}(\mathbf{X}\vcenter{\hbox{;}}\mathbf{Z})$. 
			This inequality ensures a logical and concise flow of information. 
			In our proposed FairDgcl, we can observe that $\mathcal{G}$, $\mathcal{G}_{1}$ and $\mathcal{G}_{2}$ follow the relationship $\mathcal{G}_{1} \leftarrow \mathcal{G} \rightarrow \mathcal{G}_{2}$.
			Considering that $\mathcal{G}_{1}$ and $\mathcal{G}_{2}$ are conditionally independent when $\mathcal{G}$ is observed, this scenario is Markov equivalent to the chain $\mathcal{G}_{1} \rightarrow \mathcal{G} \rightarrow \mathcal{G}_{2}$. Therefore, we have $\mathcal{I}\left(\mathcal{G}_{2}\vcenter{\hbox{;}} \mathcal{G}\right) \geq \mathcal{I}\left(\mathcal{G}_{1}\vcenter{\hbox{;}} \mathcal{G}_{2}\right)$.
			In the given context, we also observe that the mutual information between multiple views and the original graph is not less than the mutual information between a single view and the original graph~\cite{zhu2020deep}.
			From this, we can deduce the following inequality: $\mathcal{I}(\mathcal{G}\vcenter{\hbox{;}}\mathcal{G}_{1}, \mathcal{G}_{2}) \geq \mathcal{I}(\mathcal{G}\vcenter{\hbox{;}} \mathcal{G}_{2}) $.
			Finally, by combining the relationship $ \mathbf{H}_{1}\leftarrow \mathcal{G}_{1} \leftarrow \mathcal{G} \rightarrow \mathcal{G}_{2} \rightarrow \mathbf{H}_{2}$ and previous equations, we can easily get the following inequality:
			\begin{equation}
				-\mathcal{L}_{\text{NCE}} \leq  \mathcal{I}\left(\mathbf{H}_{1} \vcenter{\hbox{;}}{\mathbf{H}_{2}}\right) \leq
				\mathcal{I}\left(\mathcal{G}_{1} \vcenter{\hbox{;}}{\mathcal{G}_{2}}\right)  \leq
				\mathcal{I}\left(\mathcal{G}\vcenter{\hbox{;}}{\mathcal{G}_{1}},{\mathcal{G}_{2}}\right),
			\end{equation}
			thus, the statement presented in Theorem~\ref{th2} remains valid.
		\end{proof}
		\begin{algorithm}[t]
			\setstretch{1}
			\caption{Training Details of FairDgcl}
			\KwInput{input graph $\mathcal{G}$, learning rate $\gamma$, parameters $\theta_{g}$, $\theta_{f}$, and $\theta_{d}$.}
			\While{Converged == False}{
				\textbf{\underline{Generate Augmented Representation:}}\\
				Generate augmented views $\mathcal{G}_{1}$ and $\mathcal{G}_{2}$ ;\\
				Obtain encoding views ${\mathbf{H}}_{1}$ and ${\mathbf{H}}_{2}$;\\
				\textbf{\underline{Calculate all the loss function $\mathcal{L}$:}}\\
				Calcuate $\mathcal{L}_{\mathrm{BPR}}$, $\mathcal{L}_{\mathrm{VG}}$, and $\mathcal{L}_{\mathrm{VD}}$; \\
				$\mathcal{L} = \mathcal{L}_{\mathrm{BPR}}+ \mathcal{L}_{\mathrm{VG}}- \beta\mathcal{L}_{\mathrm{VD}}$;\\ 
				\textbf{\underline{Optimize parameter $\theta_{g}$, $\theta_{f}$:}}\\
				minimize $\mathcal{L}$ using stochastic gradient descent;\\
				Update $\theta_{g} = \theta_{g}-\gamma \nabla_{\theta_{g}} \mathcal{L}$ ;\\
				Update $\theta_{f} = \theta_{f}-\gamma \nabla_{\theta_{f}} \mathcal{L}$ ;\\
				\textbf{\underline{ Optimize parameter $\theta_{d}$:}}\\
				maximize $\mathcal{L}$ using stochastic gradient descent;\\
				Update $\theta_{d} = \theta_{d}-\gamma \nabla_{\theta_{d}} \mathcal{L}$.\\ }
			\label{algorithm}
		\end{algorithm}

		\subsection{Overall Optimization}
		The overall training process of our proposed model comprises three components: 
		the main task's BPR loss function, the loss function for the view generator, and the loss function for the view discriminator. To be specific, the view generator consists of two contrastive views, with its loss function $	\mathcal{L}_{\mathrm{VG}}$ can be expressed as follows:
		\begin{equation}
			\begin{aligned}
				\mathcal{L}_{\mathrm{VG}}=  \mathcal{L}_{\mathrm{VGAE}} +	\mathcal{L}_{\mathrm{BPR}}^{rec} + 	\mathcal{L}_{\mathrm{BPR}}^{gen} + \alpha\mathcal{L}_{\text{NCE}},
			\end{aligned}
		\end{equation}
		where $\alpha$ is hyper-parameters that control the strength of contrastive section.
		Finally, by combining $	\mathcal{L}_{\mathrm{VD}}$ and 	$\mathcal{L}_{\mathrm{BPR}}$,
		these three components adhere to the mini-max adversarial optimization  as follows:
		\begin{equation}
			\min _{\theta_{g}, \theta_{f}} \max _{\theta_{d}} \mathcal{L} = 	\min _{\theta_{g}, \theta_{f}} \max _{\theta_{d}} \mathcal{L}_{\mathrm{BPR}}+ \mathcal{L}_{\mathrm{VG}}-\beta \mathcal{L}_{\mathrm{VD}},
		\end{equation}
		where $\theta_{f}$ is the parameter set of graph encoder\footnote{It is worth noting that all the parameters of graph encoder are shared in FairDgcl.}. $\theta_{g}$ and $\theta_{d}$ denote the parameter set of view generator and view discriminator, respectively. $\beta$ regulates the impact of view discriminator.
		The parameters of $\theta_{g}$, $\theta_{f}$, and $\theta_{d}$ are optimized together using a min-max optimization procedure.
		During each training step, we begin by minimizing $\mathcal{L}$ through updating $\theta_{g}$ and $\theta_{f}$, with $\theta_{d}$ held constant. Subsequently, we maximize $\mathcal{L}$ by updating $\theta_{d}$ while keeping $\theta_{g}$ and $\theta_{f}$ fixed.
		The overall training details are presented in Algorithm~1.
		
		\vspace{3pt}
\noindent
		\textbf{Comparison with Existing Methods.} 
		Learning fair user representation in recommender system has become a vital research focus, with methodologies ranging from model optimization to the use of auxiliary interaction data. Our research aligns with the latter, emphasizing data augmentation as a pivotal strategy.
		Compared with existing methods~\cite{ying2023camus,chen2023fairgap,ling2023learning}, our approach has two major technical contributions.
        Firstly, we unveil a graph adversarial contrastive learning framework designed to rectify fairness issues in recommendations. This framework consists of a learned view generator, which is responsible for developing fair augmentation strategies and generating equitable representations, and a view discriminator, which evaluates the fairness of these representations. These components collaboratively work in an adversarial manner to enhance the model and reduce bias within the input graph, effectively addressing the two challenges mentioned in Section 1.
        Secondly, our FairDgcl method pioneers an automated graph augmentation process tailored specifically for group-based fairness in recommender systems. This method is not only rooted in theoretical principles that ensure fairness and informativeness but has also undergone empirical validation through extensive testing, showcasing its effectiveness.

		\begin{table}[t]
			\caption{Statistics of Four Experimental Datasets.}
			\vspace{-10pt}
			\resizebox{0.99\linewidth}{!}{
				\begin{tabular}{c|c|c|c|c}
					\hline 
					& ML-1M & ML-100K&Last.FM & \multicolumn{1}{l}{IJCAI-2015} \\ \hline  \hline
					\# Users & 6,040      &943    & 9,900   &  7,526                     \\
					\# Items &   3,952    &1,682    & 31,834   &    4,506                   \\
					\# Interactions  & 1,000,209      &100,000    & 285,201   & 274,431                      \\
					Sparsity     &  95.81\%     &93.70\%    & 99.91\%   & 99.19\%                      \\
					Domain &   Movies    & Movies    & Music    &    E-commerce        \\ \hline 
			\end{tabular}}
			\label{data}
		\end{table}
		\begin{table*}[h]
			\centering
			\caption{Comparison of overall performance in terms of recommendation accuracy and group fairness across four datasets with varying Top-$K$ values.  The highest scores are highlighted in bold. $*$ indicates the improvement of FairDgcl over the Graphair is significant at the level of 0.05, and ``$\uparrow$'' (``$\downarrow$'') indicates larger (smaller) values are better.
			}
			\renewcommand\arraystretch{1}
			\resizebox{0.999\linewidth}{!}{
				\begin{tabular}{c|c|cccc|cccc|cccc}
					\hline
					\multirow{2}{*}{Dataset} & \multirow{2}{*}{Method} & \multicolumn{4}{c|}{$K$=10}         & \multicolumn{4}{c|}{$K$=20}         & \multicolumn{4}{c}{$K$=30}          \\ \cline{3-14} 
					&                         & Recall~$\uparrow$& NDCG~$\uparrow$   & $\Phi_R$~$\downarrow$    & $\Phi_N$~$\downarrow$    & Recall~$\uparrow$ & NDCG~$\uparrow$   & $\Phi_R$~$\downarrow$    & $\Phi_N$~$\downarrow$    & Recall~$\uparrow$ & NDCG~$\uparrow$   & $\Phi_R$~$\downarrow$    & $\Phi_N$~$\downarrow$    \\ \hline \hline
					\multirow{7}{*}{ML-1M}   & LightGCN                & 0.1194 & 0.1945 & 0.0099 & 0.0315 & 0.1867 & 0.2027 & 0.0156 & 0.0304 & 0.2401 & 0.2157 & 0.0219 & 0.0295 \\
					& FairDrop                & 0.1144 & 0.1891 & 0.0092 & 0.0313 & 0.1791 & 0.1945 & 0.0152 & 0.0309 & 0.2395 & 0.2149 & 0.0192 & 0.0297 \\
					& FairGO                 & 0.1201        & 0.1961       &  0.0114      & 0.0325       &  0.1868      & 0.2027        & 0.0143       & 0.0293       & 0.2410       &  0.2163      &  0.0190      & 0.0293       \\
					& CAMUS                   & 0.1162       &  0.1918      & 0.0089       & 0.0319        & 0.1828        & 0.1994        &  0.0134      &  0.0295      & 0.2381       &   0.2141     &  0.0181      &  0.0291      \\
					& FairGap                     & 0.1178 & 0.1933 & 0.0087 & 0.0288 & 0.1846 & 0.2010 & 0.0117 & 0.0254 & 0.2382 & 0.2147 & 0.0178 & 0.0264 \\
					& Graphair                &  0.1435      & 0.2092       & 0.0010       &  0.0238      &  0.2275      &  0.2280      & 0.0073       & 0.0217       &  \textbf{0.2993}      &  \textbf{0.2553}      & 0.0134       & 0.0247       \\
					& \textbf{FairDgcl (Ours)}    &  \textbf{0.1531}$^{*}$      &  \textbf{0.2131}$^{*}$      & \textbf{0.0009}$^{*}$       & \textbf{0.0176}$^{*}$       & \textbf{0.2393}$^{*}$       &  \textbf{0.2295}$^{*}$      & \textbf{0.0061}$^{*}$       &  \textbf{0.0169}$^{*}$      &  0.2980     & 0.2474       &   \textbf{0.0100}$^{*}$     & \textbf{0.0164}$^{*}$      \\ \hline \hline
					\multirow{7}{*}{ML-100K}   
					&LightGCN	 &0.2300	 &0.2654	 &0.0302	 &0.0102	 &0.3512	 &0.2974	 &0.0163	 &0.0076	 &\textbf{0.4324}	 &0.3254	 &0.0142	 &0.0074        \\
					&FairDrop	&0.2235	&0.2631	&0.0189	&0.0048	&0.3398	&0.2913	&0.0272	&0.0061	&0.4248	&0.3197	&0.0238	&0.0049 \\
					&FairGO	&0.2297	&0.2695	&0.0274	&0.0050	&0.3482	&\textbf{0.2990}	&0.0307	&0.0062	&0.4277	&0.3250	&0.0341	&0.0107 \\
					&CAMUS	&0.2253	&0.2657	&0.0248	&0.0470	&0.3427	&0.2944	&0.0355	&0.0120	&0.4245	&0.3211	&0.0240	&0.0079 \\
					&FairGap	&0.2271	&0.2706	&0.0200	&0.0052&0.3457	&0.2953	&0.0222	&0.0074&0.4245	&0.3247	&0.0280&0.0103 \\
					&Graphair	&0.2299	&0.2658	&0.0204	&0.0067	&\textbf{0.3514}	&0.2982	&0.0200	&0.0019	&0.4264	&0.3234	&0.0161	&0.0052 \\
					&\textbf{FairDgcl (Ours)}&\textbf{0.2395}$^{*}$	&\textbf{0.2743}$^{*}$	&\textbf{0.0163}$^{*}$	&\textbf{0.0043}$^{*}$	&0.3415	&0.2924	&\textbf{0.0122}$^{*}$	&\textbf{0.0011}$^{*}$	&0.4299&\textbf{0.3280}$^{*}$	&\textbf{0.0139}$^{*}$	&\textbf{0.0035}$^{*}$     \\ \hline \hline
					\multirow{7}{*}{Last.FM}  
					&LightGCN	&0.0955	&0.0731	&0.0066	&0.0085	&0.1477	&0.0927	&0.0137	&0.0112	&0.1887	&0.1056	&0.0124	&0.0109       \\
					&FairDrop	&0.0953	&0.0728	&0.0071	&0.0087	&0.1472	&0.0924	&0.0124	&0.0106	&0.1884	&0.1057	&0.0132	&0.0110      \\
					&FairGO	&0.0936	&0.0716	&0.0063	&0.0070	&0.1448	&0.0909	&0.0085	&0.0074	&0.1862	&0.1037	&0.0077	&0.0075 \\
					&CAMUS	&0.0955	&0.0732	&0.0085	&0.0093	&0.1478	&0.0928	&0.0133	&0.0111	&0.1888	&0.1059	&0.0137	&0.0111\\
					&FairGap	&0.0941	&0.0723	&0.0074	&0.0087	&0.1460	&0.0918&0.0117	&0.0101	&0.1854	&0.10544&0.0143	&0.0111\\
					&Graphair	&0.1153	&0.0912	&0.0075	&0.0085	&0.1754	&0.1136	&0.0074	&0.0088	&0.2219	&0.1279	&0.0046	&0.0073\\
					&\textbf{FairDgcl (Ours)}	&\textbf{0.1312}$^{*}$&\textbf{0.1053}$^{*}$	&\textbf{0.0042}$^{*}$	&\textbf{0.0058}$^{*}$	&\textbf{0.1954}$^{*}$	&\textbf{0.1295}$^{*}$	&\textbf{0.0030}$^{*}$&\textbf{0.0054}$^{*}$	&\textbf{0.2442}$^{*}$ &\textbf{0.1449}$^{*}$	&\textbf{0.0042}$^{*}$	&\textbf{0.0056}$^{*}$\\
					\hline \hline
					\multirow{7}{*}{IJCAI-2015}  
					&LightGCN	&0.0821	&0.0676	&0.0065	&0.0006	&0.1263	&0.0838	&0.0078	&0.0009	&0.1629	&0.0958	&0.0145	&0.0009 \\
					&FairDrop	&0.0826	&0.0685	&0.0050	&0.0014	&0.1267	&0.0843	&0.0079	&0.0017	&0.1634	&0.0963	&0.1222	&0.0006\\
					&FairGO	&0.0813	&0.0673	&0.0053	&0.0008	&0.1262	&0.0836	&0.0095	&0.0005	&0.1617	&0.0956	&0.0135	&\textbf{0.0004}\\
					&CAMUS	&0.0821	&0.0676	&0.0063	&0.0012	&0.1278	&0.0841	&0.0092	&\textbf{0.0003}	&0.1632	&0.0959	&0.0156	&0.0008\\
					&FairGap	&0.0803	&0.0671	&0.0076	&0.0008&0.1256	&0.0837&0.0098	&0.0010	&0.1614	&0.0954&0.0135	&0.0014\\
					&Graphair	&0.0845	&0.0697&0.0048&0.0013	&0.1290	&0.0848	&0.0098	&0.0009	&0.1643	&0.0967	&0.0171	&0.0019\\
					&\textbf{FairDgcl (Ours)} &\textbf{0.0859}$^{*}$	&\textbf{0.0705}$^{*}$	&\textbf{0.0030}$^{*}$ &\textbf{0.0007}$^{*}$	&\textbf{0.1324}$^{*}$	&\textbf{0.0878}$^{*}$	&\textbf{0.0073}$^{*}$	&0.0023	&\textbf{0.1693}$^{*}$	&\textbf{0.1000}$^{*}$	&\textbf{0.0128}$^{*}$	&0.0018 \\ \hline 
			\end{tabular}}
			\label{main}
		\end{table*}
		\section{EXPERIMENTS}
		In this section, we present empirical results to illustrate the effectiveness of our proposed FairDgcl framework. 
		The experiments are designed to address the following four research questions:
		\begin{itemize}[leftmargin=*,labelindent=0pt,topsep=2pt,itemsep=1pt]
			\item \textbf{RQ1:} How does the performance of our proposed model compare to that of various baselines?
			\item \textbf{RQ2:} How do the key components of the proposed FairDgcl framework affect the overall performance?
			\item \textbf{RQ3:} How sensitive is the performance of FairDgcl to variations in its hyper-parameters?
			\item \textbf{RQ4:}  Do noticeable variations exist in the representation of distinct user demographics within our model?
		\end{itemize}
		
		\subsection{Experimental Settings}
		\subsubsection{\textbf{Dataset Description.}} We conduct experiments on four benchmark datasets: Movielens-1M\footnote{https://group.org/dataset/movielens} (ML-1M), Movielens-100K\footnote{https://grouplens.org/datasets/movielens/100k} (ML-100K), Last.FM\footnote{https://millionsongdataset.com/lastfm}, and IJCAI-2015\footnote{https://tianchi.aliyun.com/dataset/42}, which are widely used in fairness-aware recommendations ~\cite{wu2021learning,chen2023fairgap,chen2023improving}. The dataset statistics are summarized in Table~\ref{data}. ML-1M and ML-100K~\cite{harper2015movielens}, are two popular movie datasets for recommender systems, which encompass comprehensive records of user-item interactions and user profile information. Last.FM~\cite{celma2009music} is a music dataset collected from Last.fm music website that contains users' ratings to artists.
		IJCAI-2015~\cite{liu2022dual} is sourced from Tmall.com and focuses on e-commerce activities. It contains anonymized users' shopping logs from the past 6 months leading up to and including the ``Double 11’’ day. We use gender as the demographic sensitive attribute for evaluating fairness performance.
		For each dataset, we perform a random split of the interactions into three distinct sets: training (0.8), validation (0.1), and test (0.1). 

		\subsubsection{\textbf{Compared Models.}} For performance evaluation, we compare FairDgcl with several representative baselines that have been designed to achieve fairness in recommender systems:
		\begin{itemize}[leftmargin=*,labelindent=0pt,topsep=2pt,itemsep=1pt]
			\item \textbf{LightGCN}~\cite{he2020lightgcn} utilizes neighborhood information in the user-item graph through a layer-wise propagation scheme, which involves only linear transformations and element-wise additions.
			\item \textbf{FairDrop}~\cite{spinelli2021fairdrop} is a graph editing method that selectively obscures edges between nodes sharing similar attributes.
			\item \textbf{FairGO}~\cite{wu2021learning} utilizes a generative adversarial network to remove sensitive information from user representation, relying on the user-item bipartite graph as a foundation.
			\item \textbf{CAMUS}~\cite{ying2023camus} introduces a counterfactual data augmentation strategy tailored for non-mainstream users, designed to not only improve the performance of these users but also to reduce disparities across different user groups.
			\item \textbf{FairGap}~\cite{chen2023fairgap} is a reweighting approach inspired by counterfactual thinking, aimed at generating a fair graph and learning fair node representations.
			\item \textbf{Graphair}~\cite{ling2023learning} is a learnable data augmentation approach that improve graph fairness for node classification task, achieved through perturbing edges and masking node features in a adversarial learning framework.
			
		\end{itemize}
		\subsubsection{\textbf{Evaluation Protocols.}}
		As our primary focus on trade-off between recommendation fairness and accuracy, we need to evaluate two aspects and report the trade-off results. 
		We use the all-rank evaluation protocol, which tests and ranks both the positive items and all non-interacted items for each test user in the test set.
		Firstly,
		for recommendation accuracy evaluation, we adopt commonly-used Recall$@K$
		and Normalized Discounted Cumulative Gain (NDCG)$@K$ as metrics. 
		Larger values of Recall and NDCG indicate better recommendation accuracy performance. Then, we use group fairness $\Phi$ (in Eq.~(\ref{phi})) to evaluate fairness. $\Phi_{R}$ ($\Phi_{N}$) refers to the difference in terms of recall (ndcg) performance in different user groups.
		It's important to highlight that smaller values for $\Phi_{R}$ and $\Phi_{N}$ indicate better fairness results, 
		Following~\cite{chen2023improving}, we set the value of $K$ to $\{10,20,30\}$. 
		\subsubsection{\textbf{Implement Details.}}
		In this experiment, we use the PyTorch framework to implement FairDgcl. 
		For parameter inference, we employ the Adam~\cite{kingma2014adam} optimizer with a learning rate of 1$e^{-3}$.
		The default hidden dimensionality is set to 64, and the training epoch is set to 200.
		The hyper-parameters are determined based on grid search. In more detail,  the number of graph neural iterations is tuned from $\{1,2,3,4\}$, and the batch size are tuned in the ranges of $[512,1024,2048,4096]$. In the model training phase,  $\alpha$ and $\beta$  are respectively chosen from $\{0.0001,0.001,0.01,0.1,1 \}$.
		We fine-tune the parameters of all baseline methods and report the best results.
		
			\begin{figure*}[h]
			\centering
			\begin{subfigure}[t]{0.22\textwidth}
				\includegraphics[width=\textwidth]{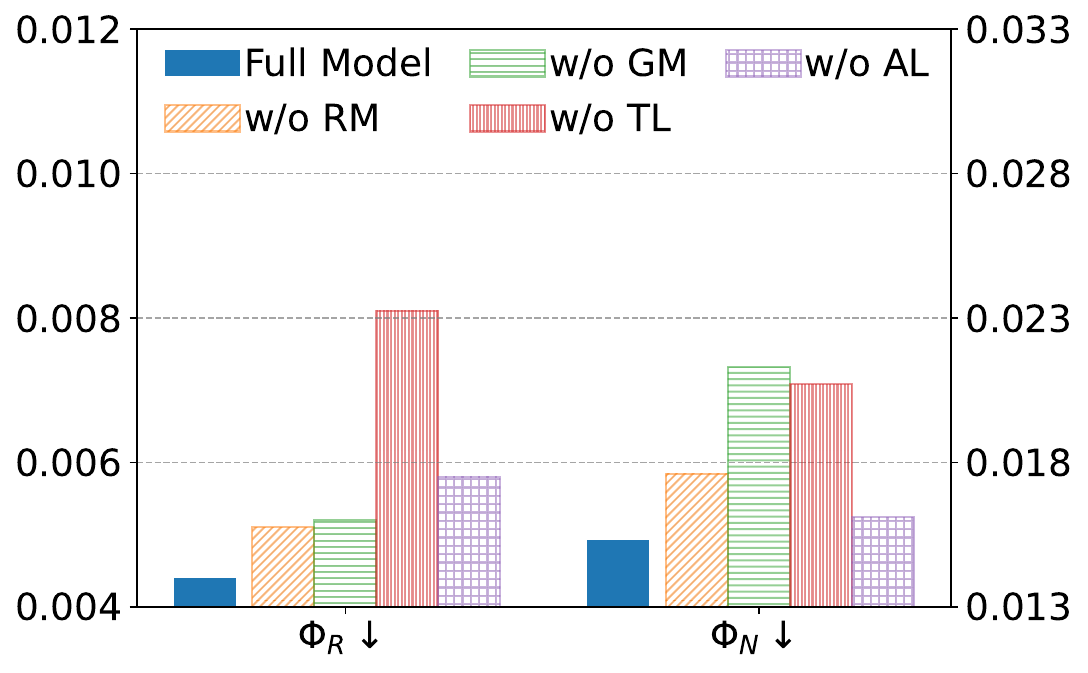}
			\end{subfigure}
			\begin{subfigure}[t]{0.22\textwidth}
				\includegraphics[width=\textwidth]{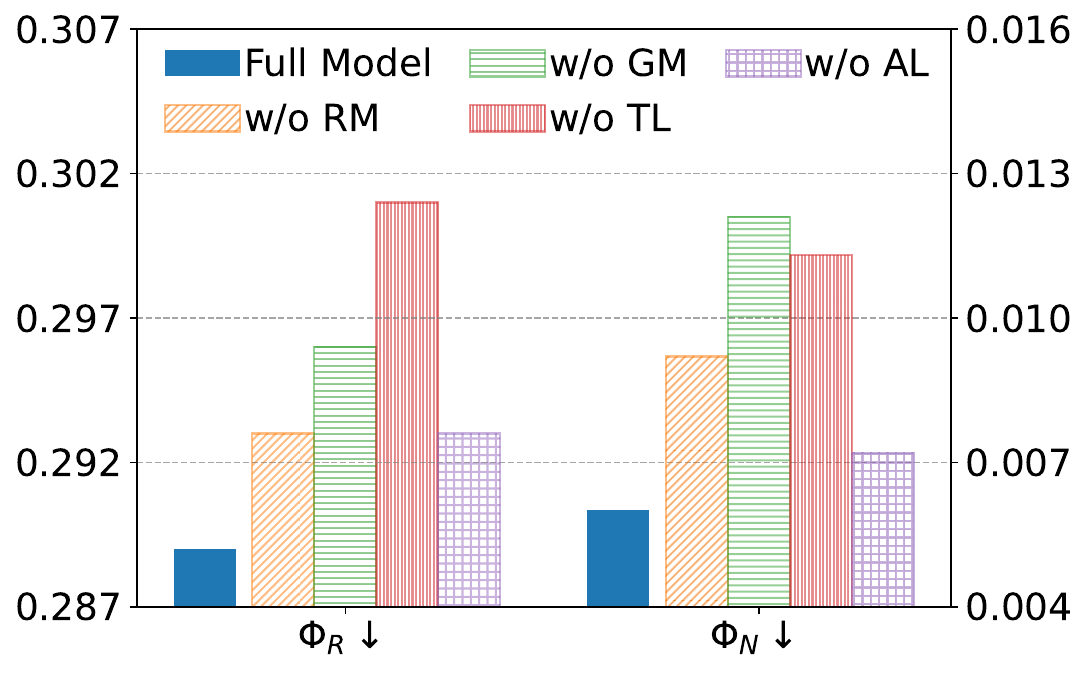}
			\end{subfigure}
			\begin{subfigure}[t]{0.22\textwidth}
				\includegraphics[width=\textwidth]{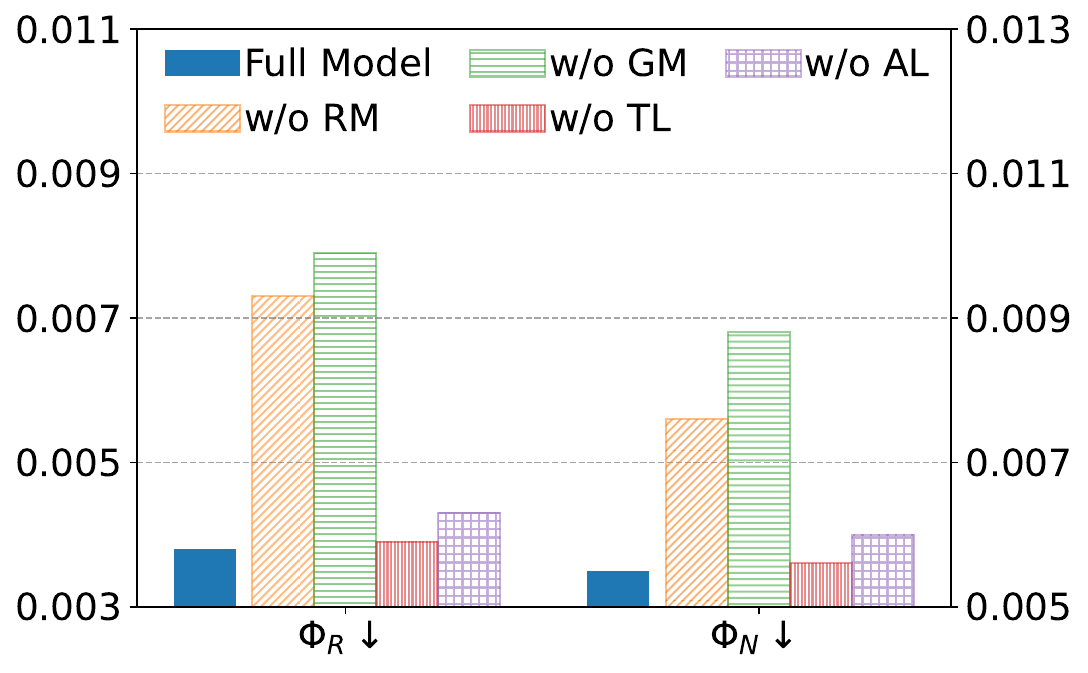}
			\end{subfigure}
			\begin{subfigure}[t]{0.22\textwidth}
				\includegraphics[width=\textwidth]{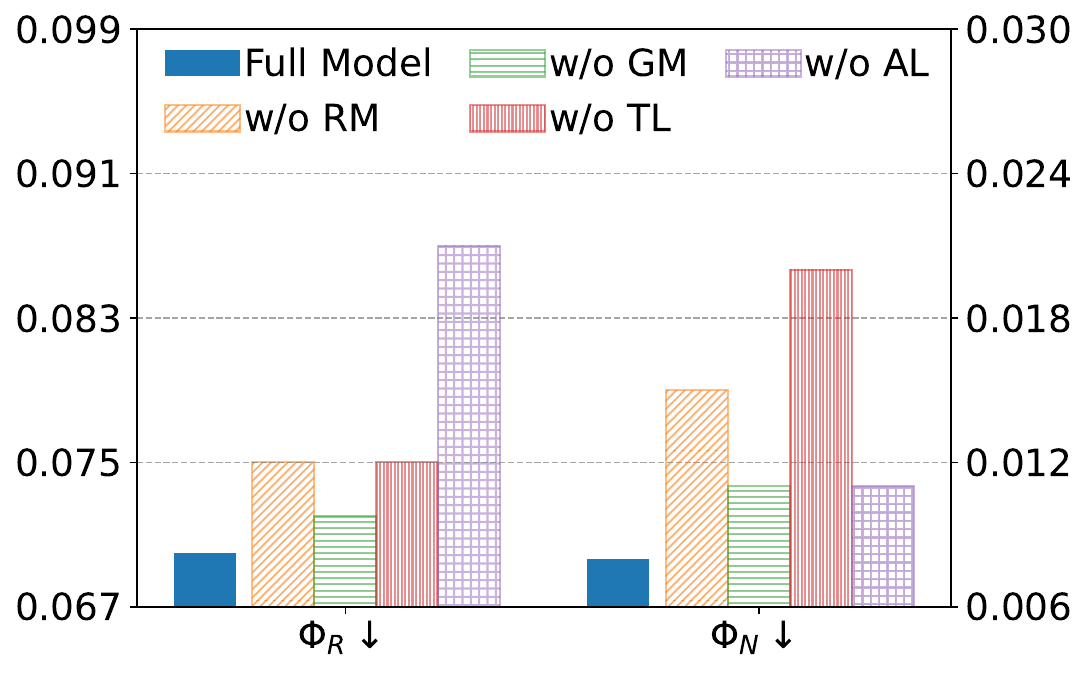}
			\end{subfigure}
			
			\begin{subfigure}[t]{0.22\textwidth}
				\includegraphics[width=\textwidth]{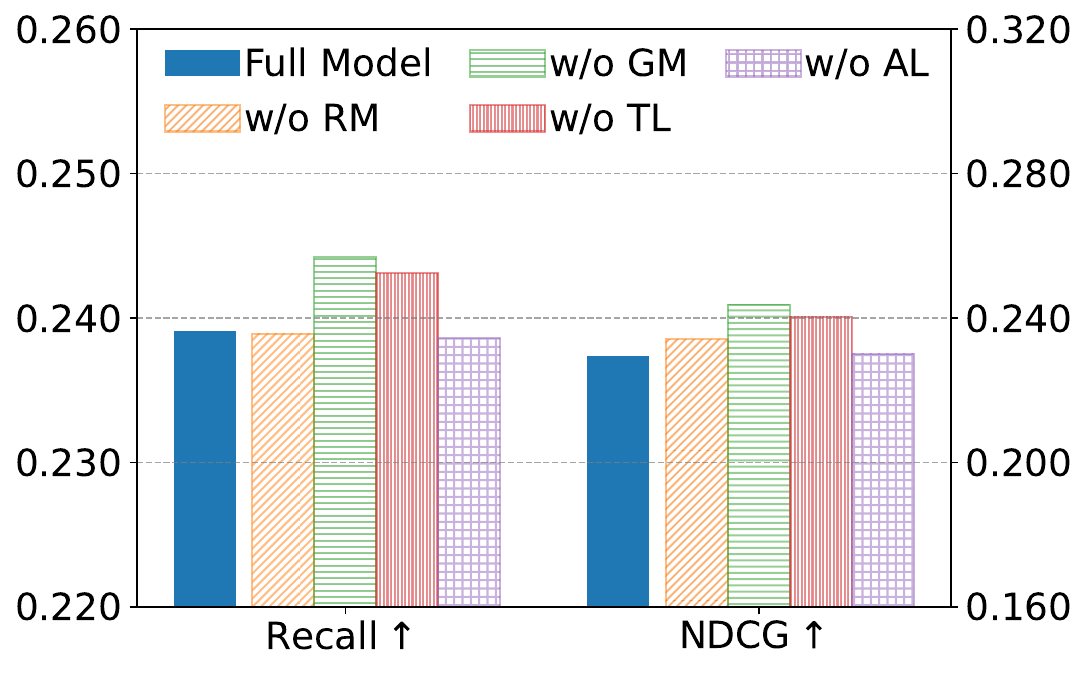}
				\caption{\normalfont ML-1M}
			\end{subfigure}
			\begin{subfigure}[t]{0.22\textwidth}
				\includegraphics[width=\textwidth]{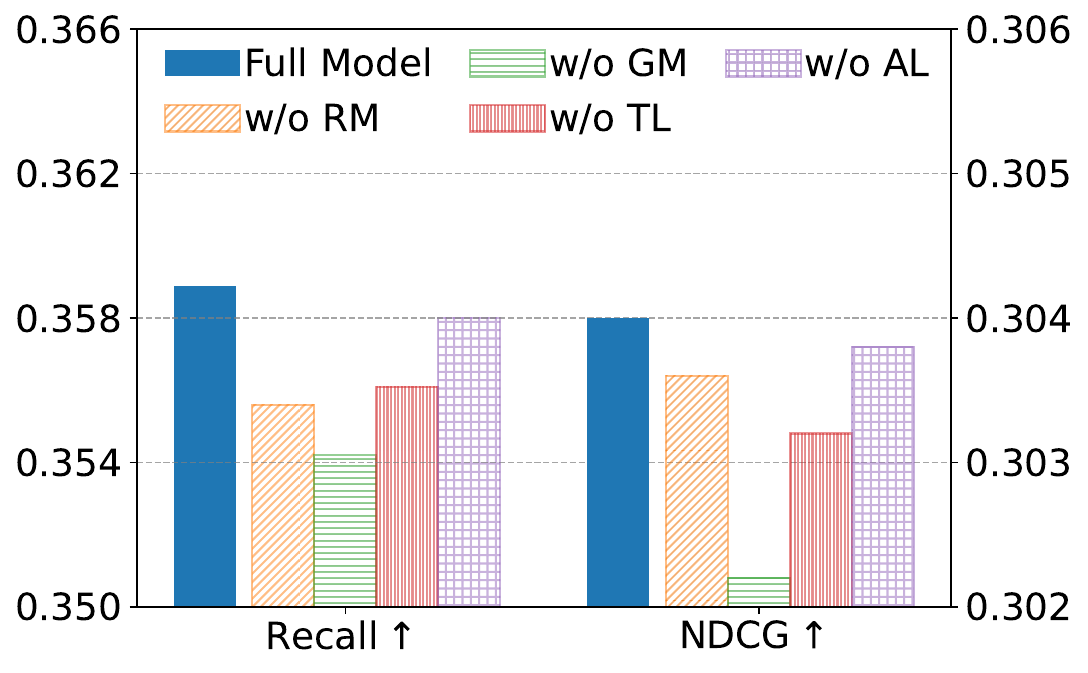}
				\caption{\normalfont ML-100K}
			\end{subfigure}
			\begin{subfigure}[t]{0.22\textwidth}
				\includegraphics[width=\textwidth]{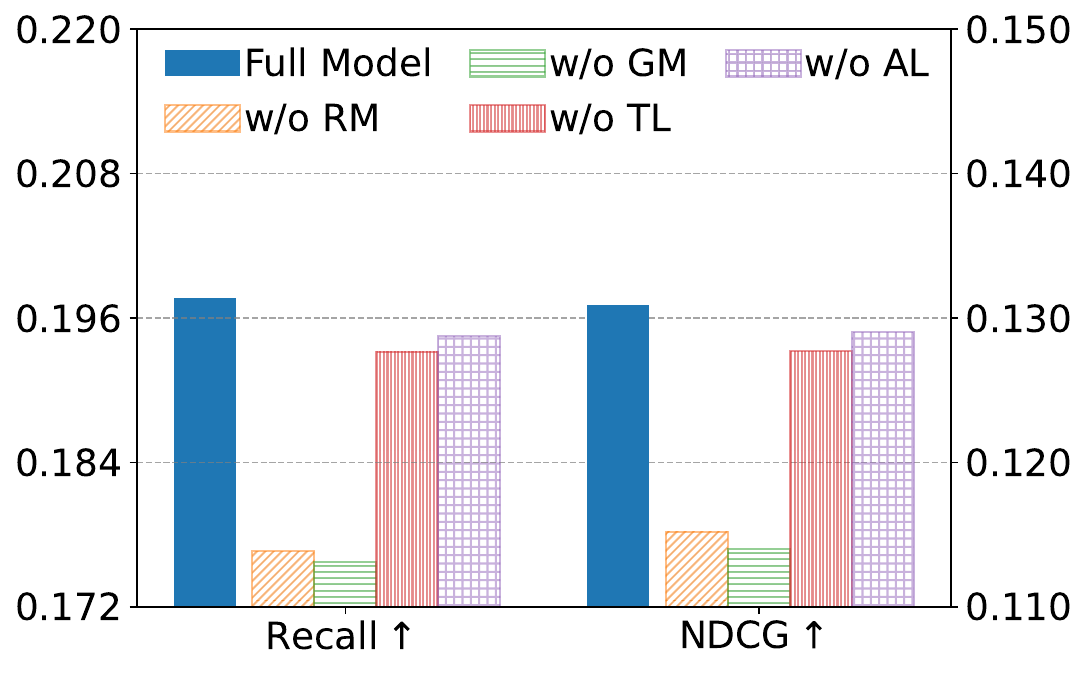}
				\caption{\normalfont Last.FM}
			\end{subfigure}
			\begin{subfigure}[t]{0.22\textwidth}
				\includegraphics[width=\textwidth]{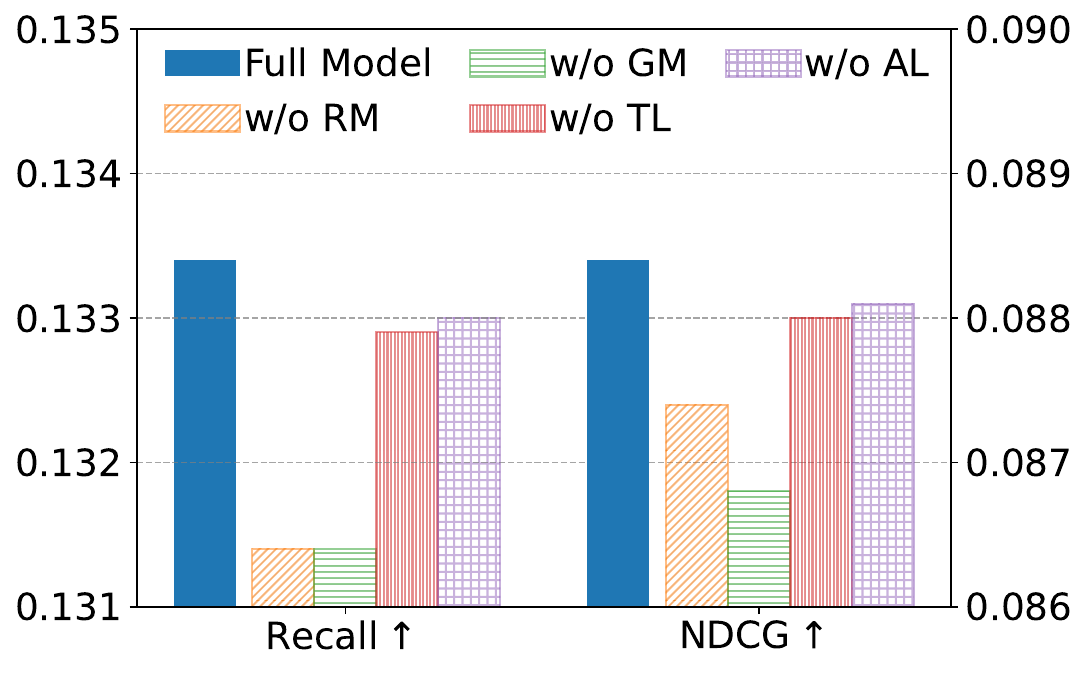}
				\caption{\normalfont IJCAI-2015}
			\end{subfigure}
			\caption{Comparing FairDgcl and its variants, all results are based on the Top-20 evaluation criteria.
				“w/o” means removal operation, “RM” stands for recognition model, “GM” for generative model, “TL” indicates the task-guide loss function (Eq.~(\ref{task})), and “AL” represents the adversarial loss (Eq.~(\ref{dis})). Note that ``$\uparrow$''  (``$\downarrow$'') indicates better (worse) performance. }
			\label{ablation}
		\end{figure*}
		\subsection{Experimental Results (RQ1)}
	
		
		
		We compare FairDgcl with different baselines on four real-world datasets to evaluate its effectiveness. To ensure a fair comparison, we try to maintain consistent recommendation accuracy across all baselines.
		The results are presented in Table~\ref{main} and our observations can be summarized as follows:
		
		Overall, our proposed FairDgcl model consistently outperforms others in terms of both group fairness scores and recommendation results across the majority of validation metrics. This not only validates the model's exceptional performance in ensuring fair recommendations but also highlights its remarkable recommendation quality, further confirming its effectiveness.
		These improvements can be attributed to two key factors. Firstly, 
		FairDgcl employs an adversarial learning framework, comprising a view generator and a view discriminator, to automatically generate fair view through a min-max game, thereby improving fairness.
		Secondly, the introduction of contrastive learning paradigm with informative contrastive views enhances the model's informativeness. These strategies collectively contribute to our model achieving superior fairness and accuracy.
		
		Among the fairness-aware baseline models, Graphair stands out as the top performer, just confirming the importance of dynamic data augmentation in fairness-aware recommender systems.
		However, in most cases, Graphair slightly falls short of our model's performance. This discrepancy may be attributed to Graphair's focus solely on edge deletion augmentation, which results in a somewhat coarser method design, overlooking the augmentation of other perspectives and consequently leading to suboptimal performance. In contrast, our model employs two different augmentation approaches based on the contrastive learning framework, effectively enhancing model's performance.
		As for the CAMUS model, it typically yields improved fairness outcomes, but its recommendation accuracy tends to be unstable.
		The same holds true for FairDrop, as they both employ heuristic strategies to improve interaction samples. This entails a laborious trial-and-error process to determine the optimal augmentation level for various datasets, making it challenging to achieve consistent improvements.
		
		When compared to the base model on the ML-100K dataset, we do notice a slight drop in recommendation accuracy (i.e., $K$=20 and $K$=30). One possible reason is that there are relatively sparse interactions within this dataset, which may lead model to unintentionally remove some valuable interactions during the adversarial learning process. Nevertheless, when considering the gains in fairness performance, the reduction in accuracy remains minimal.
		\subsection{Ablation Study (RQ2)}

		\begin{figure*}[t]
			\vspace{-0pt}
			\centering
			\setlength{\fboxrule}{0.pt}
			\setlength{\fboxsep}{0.pt}
			\fbox{
				\includegraphics[width=0.75\linewidth]{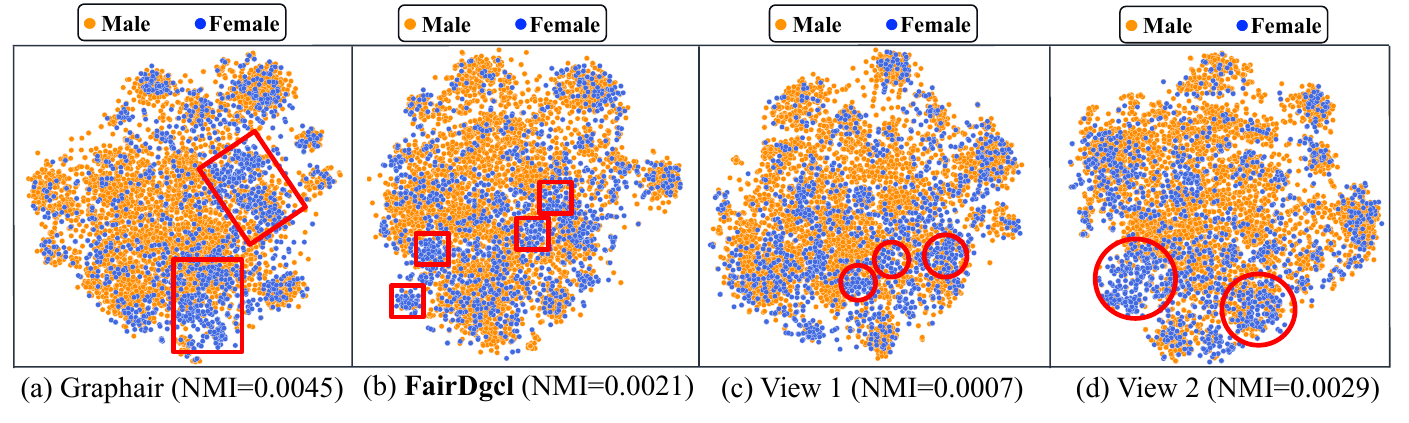}
			}
			\caption{Visualization of learned user embedding on ML-1M dataset for Graphair and FairDgcl. We use Normalized
				Mutual Information (NMI) to evaluate the clustering effect, where a smaller value indicates more dispersed embedding.
			}
			\label{vis}
		\end{figure*}
	
					\begin{figure}[t]
		\vspace{-0pt}
		\centering
		\setlength{\fboxrule}{0.pt}
		\setlength{\fboxsep}{0.pt}
		\fbox{
			\includegraphics[width=1\linewidth]{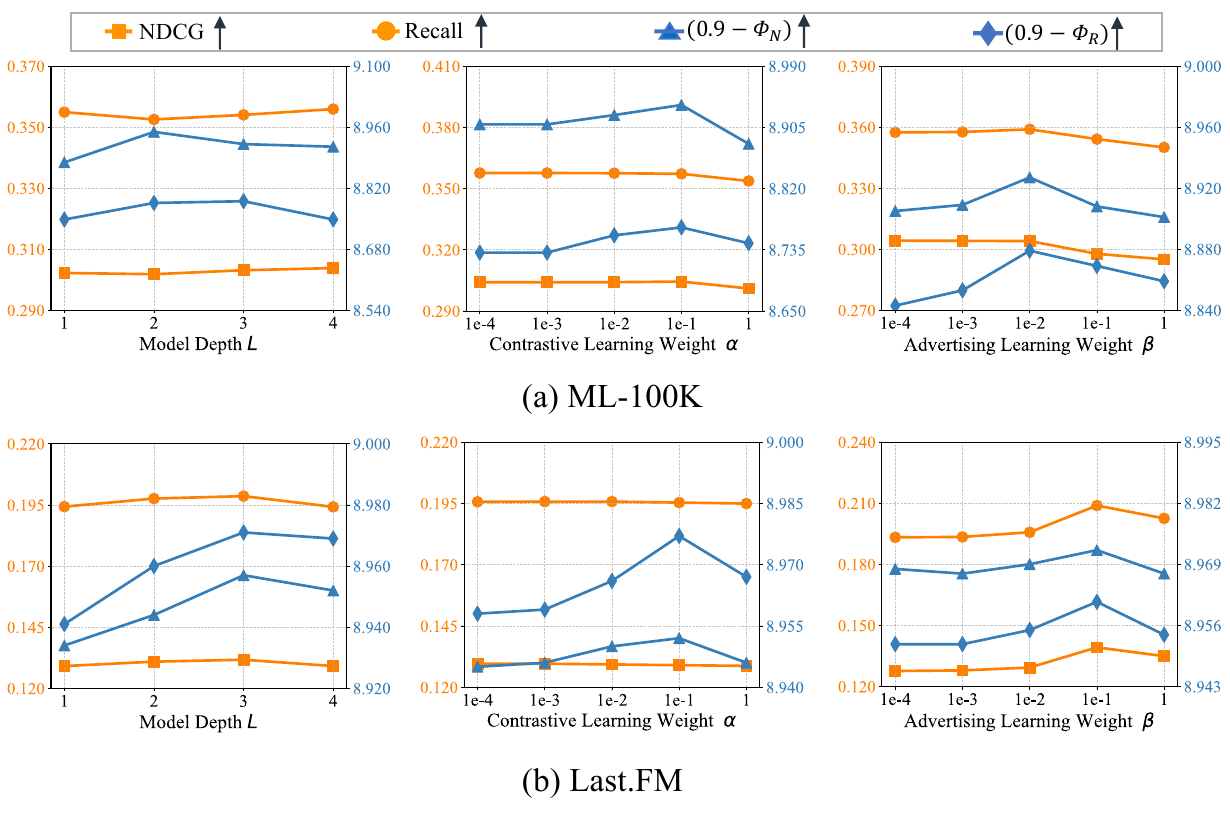}
		}
		\vspace{-15pt}
		\caption{ The recommendation accuracy and fairness performances of our FairDgcl w.r.t varying model depth $L$, weight $\alpha$, and weight $\beta$ on ML-100K and Last.FM datasets, evaluated using the Top-20 criteria.
		}
		\label{para}
		\vspace{-15pt}
	\end{figure}
		To investigate the specific contributions of different components, we conduct a more in-depth analysis of FairDgcl and perform ablation studies. 
		It should be noted that if  the module ``RM” or ``GM”  are removed, FairDgcl will degrade to Graphair. The ablation results are presented in Figure~\ref{ablation}.
		
		First, we can observe that removing specific modules consistently results in a decrease in the model's fairness and accuracy across most scenarios. This emphasizes the crucial role played by various model components in determining the overall performance results. More specifically, within the majority of datasets, the decrease in fairness exhibits a more significant impact compared to the decrease in accuracy. This is unsurprising, given that these modules are primarily crafted with a focus on enhancing fairness.
		
		
		Then, when delving deeper into the comparison between FairDgcl and its variants on ML-1M, a notable observation emerges: the accuracy performance exhibits inconsistency when compared to other datasets. 
		The decrease in accuracy is likely due to a greater variance in gender distribution when compared to other datasets. Eliminating the fairness-related module could potentially strengthen the model's focus on serving the primary user groups. This could lead to better prediction accuracy for these users, ultimately contributing to an overall improvement in accuracy.


		\subsection{Parameter Sensitivity (RQ3)} 
		In this section, we explore FairDgcl's sensitivity to three key hyper-parameters: model depth ($L$), contrastive learning weight ($\alpha$), and advertising learning weight ($\beta$). Due to limited space, we provide experimental results for ML-100K and Last.FM datasets in Figure~\ref{para}. 
		\subsubsection{\textbf{Impact of Model Depth $L$.}}
		Firstly, we investigate the influence of aggregation layer number, represented as $L$. More aggregation layers enable the model to capture deeper collaborative signals, improving its ability to discern user behavior patterns. However, this also raises computational demands and the risk of over-smoothing, we perform experiments with $L$  in $\{1, 2, 3, 4\}$.
		
		From Figure~\ref{para}, we notice that as the layer $L$ increases, fairness performance gradually improves. This improvement can be attributed to the filtering out of more biased interaction signals.
		However, as $L$ continues to rise, the performance eventually stabilizes and may even exhibit a slight decrease. A possible reason is that the model parameters are increasing, which may lead to overfitting.
		It is worth noting that alterations in model depth have a relatively minor effect on model accuracy in comparison to fairness changes.
		We speculate that this arises from the capability of our proposed information enrichment mechanism to directly capture the most valuable information from user-item interactions.


		\subsubsection{\textbf{Impact of Contrastive Learning Weight $\alpha$ }}
		Secondly, we study the impact  of $\alpha$, a parameter that regulates the intensity of contrastive learning. In this analysis, we manipulate the value of $\alpha$, selecting from the set $\{0.0001, 0.001, 0.01, 0.1, 1\}$. The experimental results are shown in second column of Figure~\ref{para}.
		Notably, we observe that optimal fairness performance is attained when $\alpha = 0.1$ for both of these datasets. As $\alpha$ continues to increase beyond this point, we observe a decline in both accuracy and fairness, suggesting the significance of selecting an appropriate value for $\alpha$ in our model.

		
		\subsubsection{\textbf{Impact of Advertising Learning Weight $\beta$ }}
		Finally, we also analyze the impact of the weight $\beta$, which controls the strength of advertising learning.
		Analogously, we vary its values within the set $\{0.0001, 0.001, 0.01, 0.1, 1\}$.
		The results are shown in the last column of Figure~\ref{para}. 
		In the ML-100K dataset, we observe that the optimal trade-off between accuracy and fairness is achieved at $\beta = 0.01$. Beyond this threshold, the fairness and accuracy begin to gradually decrease. Similarly, for Last.FM dataset, the model reaches its ideal trade-off  at $\beta = 0.1$. This suggests that a higher $\beta$ is more favorable for optimizing advertising learning loss in FairDgcl.
		

		\subsection{Visualisation Analysis (RQ4)} 
		To provide a clearer understanding of our model's superior performance, we perform visualization analysis of our proposed FairDgcl and the baseline Graphair. Specifically, we project user embeddings into a 2-D space based on the gender attribute of the ML-1M using the t-SNE tool. 
		The results are shown in Figure~\ref{vis}.
		
		As shown in Figure~\ref{vis}, FairDgcl exhibits a relatively even distribution of users across genders compared to Graphair, evident in the red rectangle. This indicates reduced sensitivity to sensitive attributes in the learned embeddings, achieved through adaptive unfairness mitigation via diverse augmentation strategies, resulting in a uniform clustering effect.
		Furthermore, we compare the model's embeddings under different augmentation views. In general, both augmentation models produce relatively dispersed representations compared to baseline, indicating the effectiveness of augmentation strategies. However, subtle differences still exist between them, with view 2 showing a larger aggregation range compared to view 1 (i.e., denoted by red circles).
		This discrepancy may be attributed to the fact that the generative model (i.e., view 2) focuses on reconstructing unbiased features, preserving more of the original node information. In contrast, the recognition model (i,e., view 1) aims to eliminate unfair connections, effectively removing more biased links and achieving fairer outcomes.

		
		\section{Conclusion AND FUTURE WORK}
		In this paper, we study the user-fairness problem in recommender systems from the data-centric  perspective, and propose a dynamic contrastive learning framework FairDgcl. It leverages an adversarial contrastive network, comprising a view generator and a view discriminator,  to automatically learn and genera fair augmented strategies through a minimax game. Furthermore, we integrate two learnable models to create contrastive views that dynamically fine-tune the augmentation strategies. Finally, 
		the theoretical proofs and experimental results on four real-world datasets clearly show the effectiveness of proposed FairDgcl.
		
		While this paper takes an initial step in applying dynamic data augmentation to enhance recommendation fairness, there remains  room for improvement. For example, our work primarily focuses on binary sensitive attributes, which opens up opportunities for further research into managing a broader range of attribute categories. Additionally, our research assumes the availabiity of sensitive labels, yet real-world scenarios often lack such labels due to user privacy and other constraints. Hence, exploring the application of our framework in scenarios where labels are unavailable is a key area for future research.

\bibliographystyle{IEEEtran}
\bibliography{sample-base}
\begin{IEEEbiography}[{\includegraphics[width=1in,height=1.25in,clip,keepaspectratio]{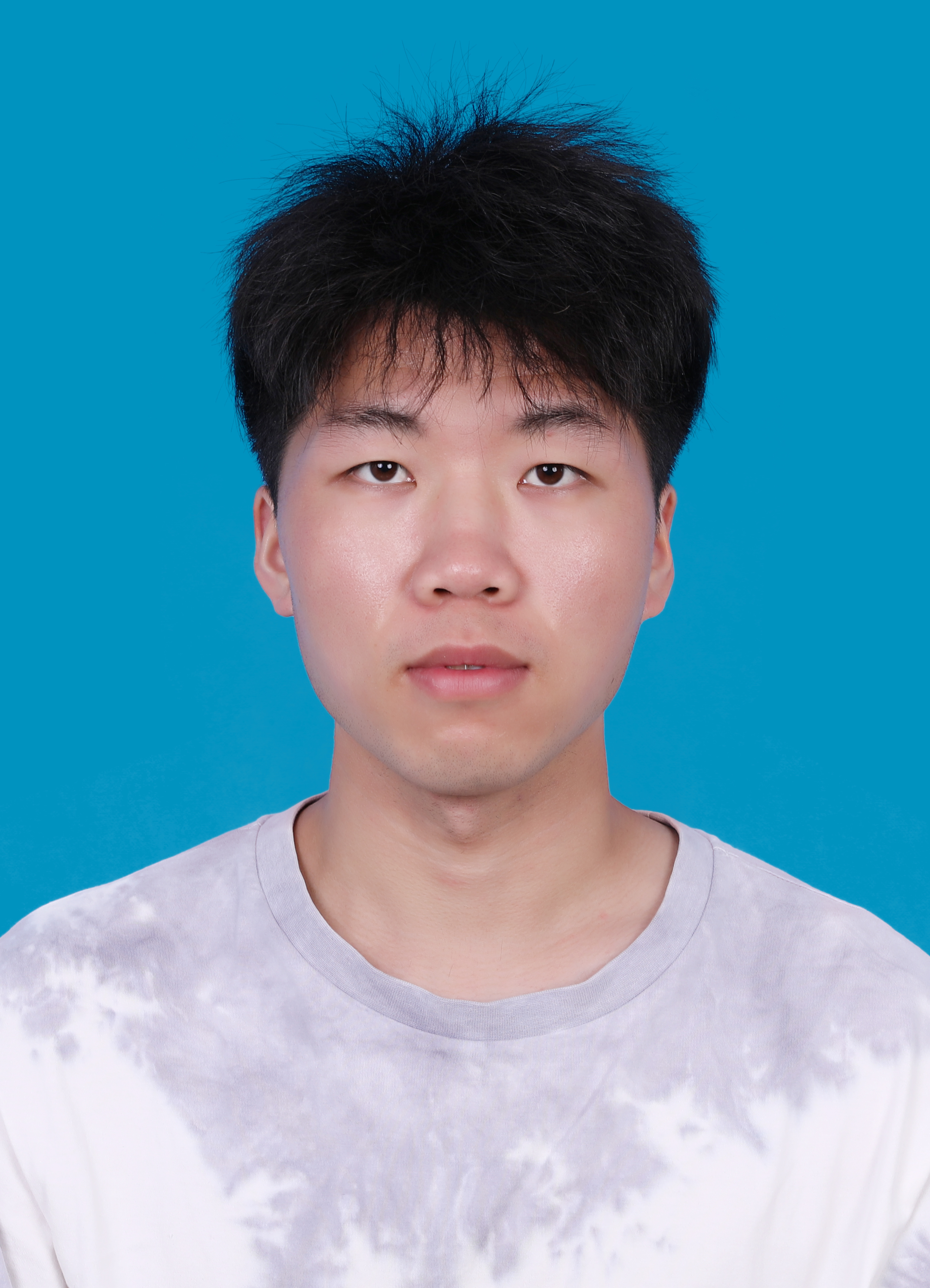}}]{Wei Chen}
is currently pursuing his Ph.D. degree in the Institute of Artificial Intelligence, Beihang University, Beijing, China. His main research interests include recommender system and natural language processing. 
\end{IEEEbiography}
\vspace{-40pt}

\begin{IEEEbiography}[{\includegraphics[width=1in,height=1.25in,clip,keepaspectratio]{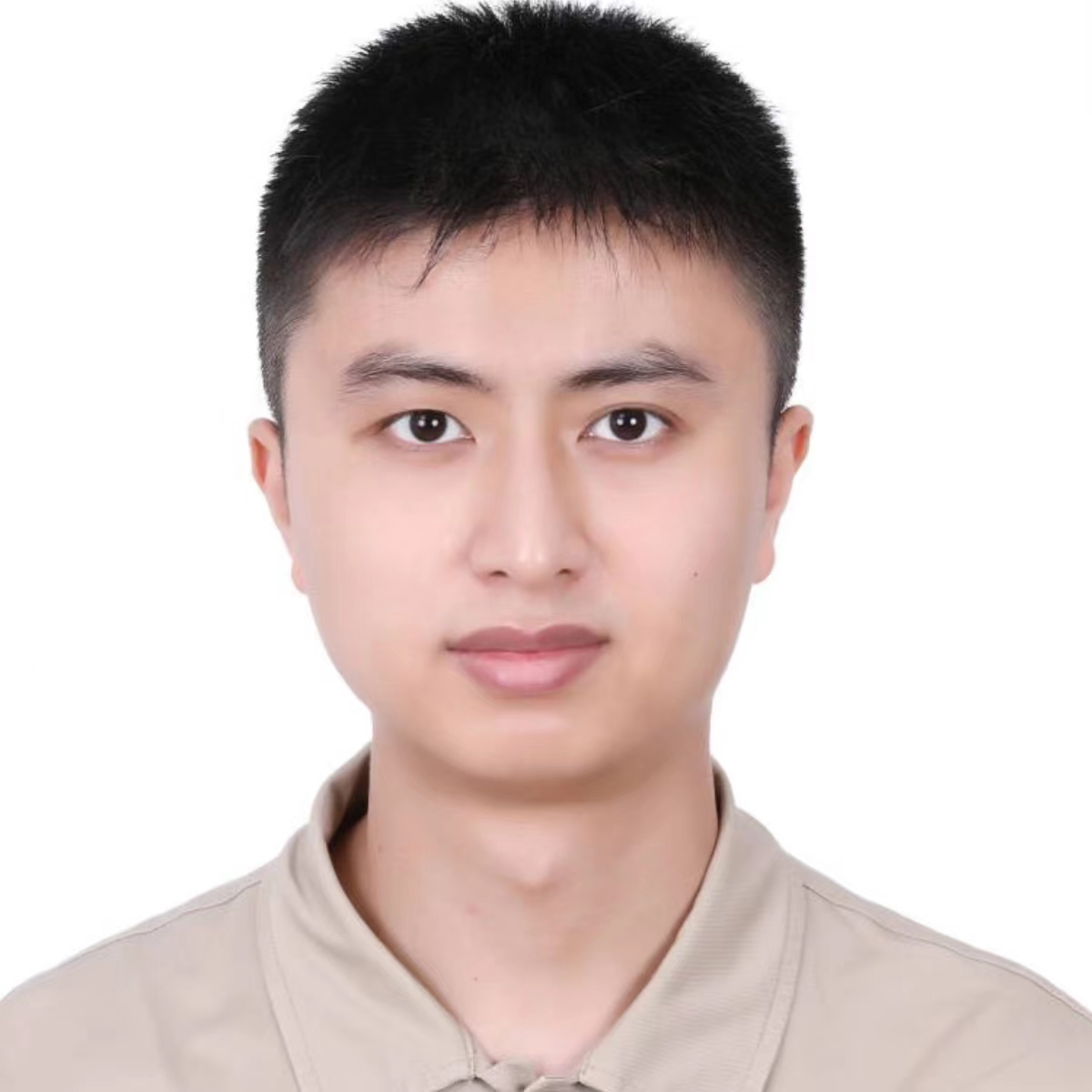}}]{Meng Yuan} is currently pursuing his Ph.D. degree in the Institute of Artificial Intelligence, Beihang University, Beijing, China. His main research interests include recommender system, hyperbolic geometry and machine learning. 
\end{IEEEbiography}
\vspace{-40pt}

\begin{IEEEbiography}[{\includegraphics[width=1in,height=1.25in,clip,keepaspectratio]{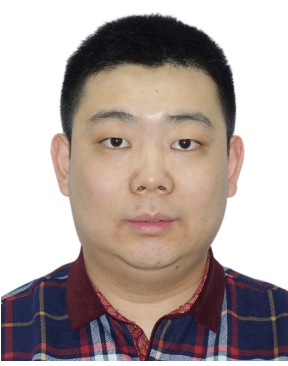}}]{Zhao Zhang}
	 is a research associate at the
	Institute of Computing Technology, Chinese
	Academy of Sciences, Beijing, China. He received the B.E. degree in Computer Science
	and Technology from the Beijing Institute of
	Technology (BIT) in 2015, and Ph.D. degree in
	the Institute of Computing Technology, Chinese
	Academy of Sciences in 2021. His current research interests include data mining and knowledge graphs.
\end{IEEEbiography}
\vspace{-40pt}

\begin{IEEEbiography}[{\includegraphics[width=1in,height=1.25in,clip,keepaspectratio]{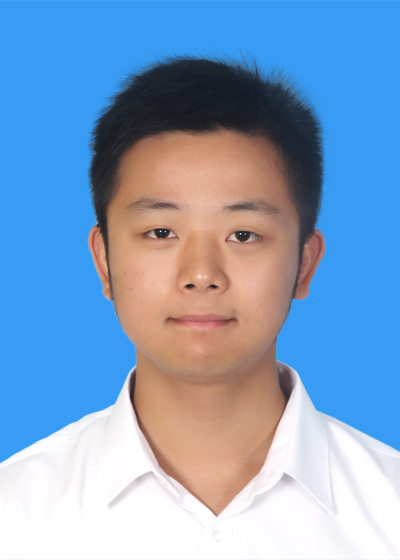}}]{Ruobing Xie}
is a senior researcher of WeChat,
Tencent. He received his BEng degree in 2014
and his master degree in 2017 from the Department of Computer Science and Technology,
Tsinghua University. His research interests include recommender system, knowledge graph,
and natural language processing. He has published over 60 papers in top-tier conferences
and journals including KDD, WWW, SIGIR, ACL,
AAAI, NeurIPS and TKDE. 
\end{IEEEbiography}

\vspace{-40pt}
\begin{IEEEbiography}[{\includegraphics[width=1in,height=1.25in,clip,keepaspectratio]{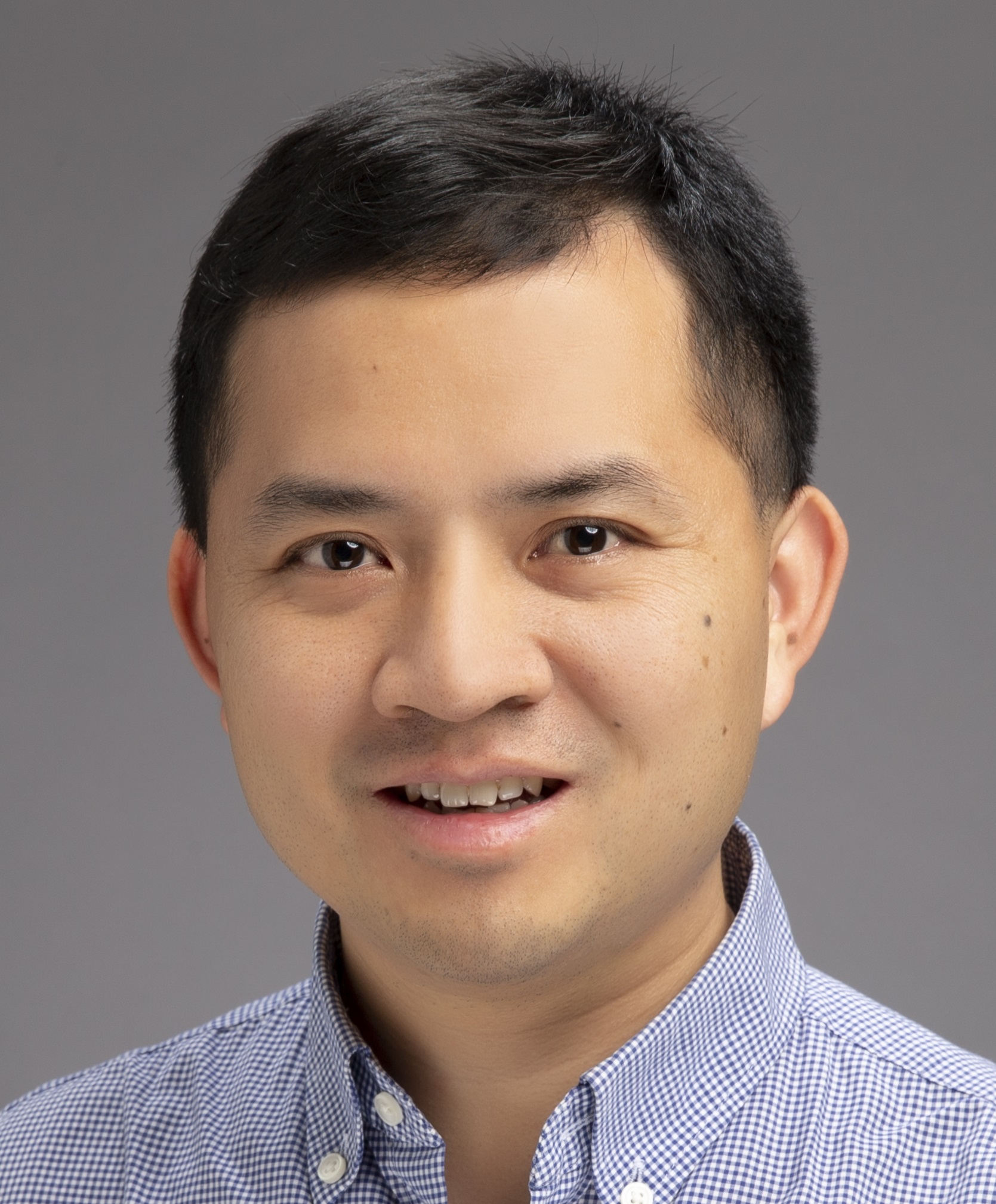}}]{Fuzhen Zhuang}
	is a professor in Institute of Artificial Intelligence, Beihang University. His research interests include transfer learning, machine learning, data mining, multi-task learning and recommendation systems. He has published over 100 papers in the prestigious refereed journals and conference proceedings, such as Nature Communications, TKDE, Proc. of IEEE, TNNLS, TIST, KDD, WWW, SIGIR, NeurIPS, AAAI, and ICDE. 
\end{IEEEbiography}

\vspace{-40pt}
\begin{IEEEbiography}[{\includegraphics[width=1in,height=1.25in,clip,keepaspectratio]{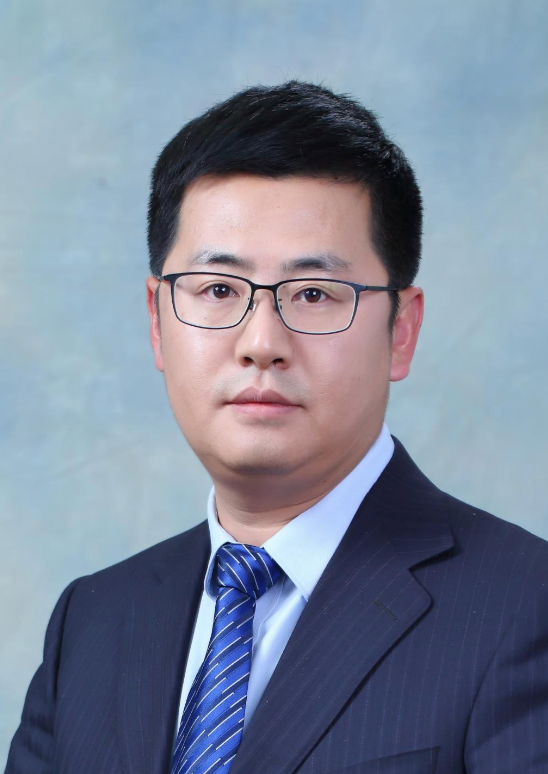}}]{Deqing Wang} received  the  PhD  degree  in  computer  science  from  Beihang  University,  China  in
	2013. He is currently an Associate Professor with
	the  School  of  Computer  Science  and  the  Deputy
	Chief  Engineer  with  the  National  Engineering
	Research  Center  for  Science  Technology
	Resources  Sharing  and  Service,  Beihang
	University,  China.  His  research  focuses  on  text  categorization  and
	data mining for software engineering and machine learning
\end{IEEEbiography}

\vspace{-40pt}
\begin{IEEEbiography}[{\includegraphics[width=1in,height=1.25in,clip,keepaspectratio]{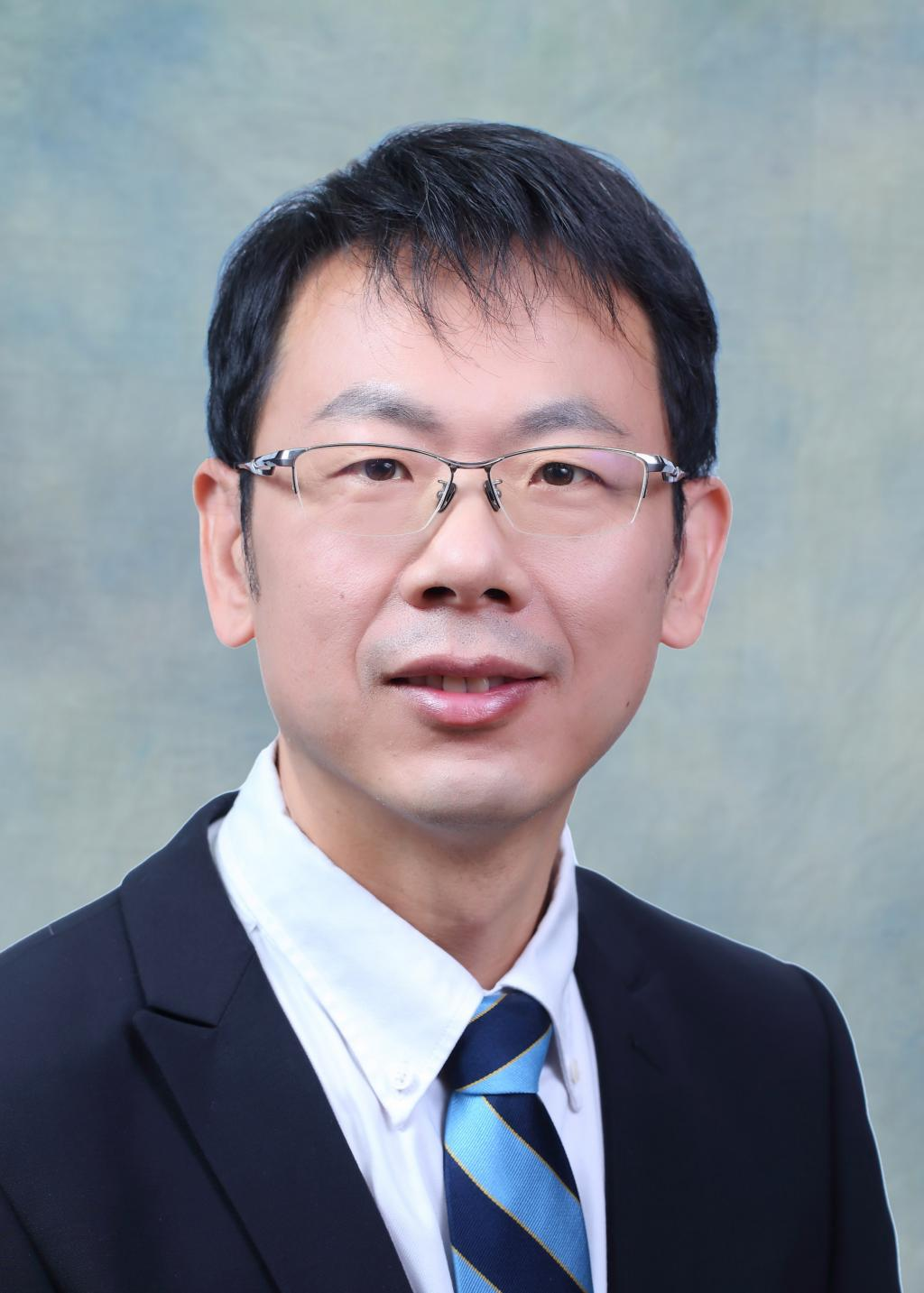}}]{Rui Liu} received his M.S. and Ph.D. degrees in computer
	science from Beihang University, China, in 1995 and
	2011, respectively. He is currently an associate professor
	in the State Key Laboratory of Software Development
	Environment, School of Computer Science and Engineering at Beihang University. 
	Now he is working for National Engineering Research Center for Science and
	Technology Resource Sharing Service of China Portal as
	a chief engineer. His main research interests include
	database, information retrieval and data mining.
\end{IEEEbiography}

\end{document}